\newcommand{\revisione}[1]{\textcolor{black}{#1}}
\definecolor{red_plot}{HTML}{F30522}
\definecolor{first_plot}{HTML}{F30522}
\definecolor{green_plot}{HTML}{20D525}
\definecolor{second_plot}{HTML}{20D525}
\definecolor{dark_blue_plot}{HTML}{2166AC}
\definecolor{third_plot}{HTML}{2166AC}
\definecolor{yellow_plot}{HTML}{DED712}
\definecolor{fourth_plot}{HTML}{DED712}
\definecolor{orange_plot}{HTML}{FFD0B4}
\definecolor{fifth_plot}{HTML}{FFD0B4}
\definecolor{light_blue_plot}{HTML}{D1E5F0}
\definecolor{sixth_plot}{HTML}{D1E5F0}
\newcommand{\resizeGraphFactor}{0.85}
\newcommand{\impl}{\ {:\!-}\  } 
\DeclareMathOperator{\lowerprob}{\underline{P}}
\DeclareMathOperator{\upperprob}{\overline{P}}
\newtheorem{example}{Example}
\newtheorem{definition}{Definition}
\newtheorem{theorem}{Theorem}
\begin{document}

\lefttitle{Cambridge Author}

\jnlPage{xx}{xx}
\jnlDoiYr{2024}
\doival{10.1017/xxxxx}

\title[Hybrid Probabilistic Answer Set Programming]{Probabilistic Answer Set Programming with Discrete and Continuous Random Variables}

\begin{authgrp}
\author{\gn{Damiano} \sn{Azzolini} }
\affiliation{Department of Environmental and Prevention Sciences -- University of Ferrara, Ferrara, Italy \\
\email{damiano.azzolini@unife.it}}
\author{\gn{Fabrizio} \sn{Riguzzi} }
\affiliation{Department of Mathematics and Computer Science -- University of Ferrara, Ferrara, Italy\\
\email{fabrizio.riguzzi@unife.it}}
\end{authgrp}

\history{\sub{xx xx xxxx;} \rev{xx xx xxxx;} \acc{xx xx xxxx}}

\maketitle

\begin{abstract}
  Probabilistic Answer Set Programming under the credal semantics (PASP) extends Answer Set Programming with probabilistic facts that represent uncertain information.
  The probabilistic facts are discrete with Bernoulli distributions.
  However, several real-world scenarios require a combination of both discrete and continuous random variables.
  In this paper, we extend the PASP framework to support continuous random variables and propose Hybrid Probabilistic Answer Set Programming (HPASP).
  Moreover, we discuss, implement, and assess the performance of two exact algorithms based on projected answer set enumeration and knowledge compilation and two approximate algorithms based on sampling.
  Empirical results, also in line with known theoretical results, show that exact inference is feasible only for small instances, but knowledge compilation has a huge positive impact on the performance.
  Sampling allows handling larger instances, but sometimes requires an increasing amount of memory.
  Under consideration in Theory and Practice of Logic Programming (TPLP).
\end{abstract}

\begin{keywords}
  Hybrid Probabilistic Answer Set Programming, Statistical Relational Artificial Intelligence, Credal Semantics, Algebraic Model Counting, Exact and Approximate Inference.
\end{keywords}

\maketitle

\section{Introduction}
\label{sec:introduction}
Almost 30 years ago~\citep{DBLP:conf/iclp/Sato95}, Probabilistic Logic Programming (PLP) was proposed for managing uncertain data in Logic Programming.
Most of the frameworks, such as PRISM~\citep{DBLP:conf/iclp/Sato95} and ProbLog~\citep{DBLP:conf/ijcai/RaedtKT07}, attach a discrete distribution, typically Bernoulli or Categorical, to facts in the program and compute the success probability of a query, i.e., a conjunction of ground atoms.
Recently, several extensions have been proposed for dealing with continuous distributions as well~\citep{AzzRigLam21-AIJ-IJ,DBLP:conf/ilp/GutmannJR10,DBLP:journals/tplp/GutmannTKBR11}, that greatly expand the possible application scenarios.

Answer Set Programming~\citep{brewka2011asp} is a powerful formalism for representing complex combinatorial domains.
Most of the research in this field focuses on deterministic programs.
There are three notable exceptions: LPMLN~\citep{DBLP:conf/kr/LeeW16}, P-log~\citep{DBLP:journals/tplp/BaralGR09}, and Probabilistic Answer Set Programming under the credal semantics~\citep{DBLP:conf/ilp/CozmanM16}.
The first assigns weights to rules while the last two attach probability values to atoms.
However, all three allow discrete distributions only.

In this paper, we extend Probabilistic Answer Set Programming under the credal semantics to \textit{Hybrid} Probabilistic Answer Set Programming (HPASP), by adding the possibility of expressing continuous distributions.
Our approach is based on a translation of the hybrid probabilistic answer set program into a regular probabilistic answer set program (with discrete random variables only) via a process that we call \textit{discretization}.
In this way we can adapt and leverage already existing tools that perform inference in probabilistic answer set programs under the credal semantics with discrete variables only.
We implemented the discretization process on top of two exact solvers, based on projected answer set enumeration and knowledge compilation, respectively, and tested them on a set of five benchmarks with different configurations.
Furthermore, we also implemented and tested two approximate algorithms based on sampling on the original program (with both discrete and continuous facts) and on the discretized program (with only discrete facts).
Our experiments show that knowledge compilation has a huge impact in reducing the execution time and thus in increasing the scalability of the inference task.
However, larger instances require approximate algorithms that perform well in almost all the tests, but require, in particular the one based on sampling the discretized program, a substantial amount of memory.  

The paper is structured as follows: Section~\ref{sec:background} introduces the needed background knowledge.
Section~\ref{sec:hpasp} illustrates Hybrid Probabilistic Answer Set Programming and Section~\ref{sec:alg} presents the exact and approximate algorithms for performing inference, whose performance are tested in Section~\ref{sec:experiments}.
Section~\ref{sec:related} compares our proposal with related work and Section~\ref{sec:conclusions} concludes the paper.

\section{Background}
\label{sec:background}

\revisione{In this section we review the main concepts used in the paper.}

\subsection{\revisione{Answer Set Programming}}
\label{subsec:asp}
In this paper, we consider Answer Set Programming (ASP)~\citep{brewka2011asp}.
An answer set program contains \textit{disjunctive rules} of the form
$
h_1 ; \dots ; h_m \impl \ b_1, \dots, b_n.
$
where the disjunction of atoms $h_i$s is called \textit{head} and the conjunction of literals $b_i$s is called \textit{body}.
Rules with no atoms in the head are called \textit{constraints} while rules with no literals in the body and a single atom in the head are called \textit{facts}.
\textit{Choice rules} are a particular type of rules where a single head is enclosed in curly braces as in $\{a\} \impl b_1, \dots, b_n$, whose meaning is that $a$ may or may not hold if the body is true.
They are syntactic sugar for $a;not\_a \impl b_1,\dots,b_n$ where $not\_a$ is an atom for a new predicate not appearing elsewhere in the program.
We allow aggregates~\citep{alviano2018aggregates} in the body, one of the key features of ASP that allows representing expressive relations between the objects of the domain of interest.
An example of a rule $r0$ containing an aggregate is: $v(A)\impl \#count\{X:b(X)\}=A$.
Here, variable $A$ is unified with the integer resulting from the evaluation of the aggregate $\#count\{X:b(X)\}$, that counts the elements $X$ such that $b(X)$ is true.

\revisione{The semantics of ASP is based on the concept of \textit{stable model}~\citep{gelfond1988stable}.
Every answer set program has zero or more stable models, also called \textit{answer sets}.
An interpretation $I$ for a program $P$ is a subset of its Herbrand base.
The \textit{reduct} of a program $P$ with respect to an interpretation $I$ is the set of rules of the grounding of $P$ with the body true in $I$.
An interpretation $I$ is a stable model or, equivalently, an answer set, of $P$ if it is a minimal model (under set inclusion) of the reduct of $P$ w.r.t. $I$.
}
We denote with $AS(P)$ the set of answer sets of an answer set program $P$ and with $|AS(P)|$ its cardinality.
\revisione{If $AS(P) = \emptyset$ we say that $P$ is unsatisfiable.}
If we consider rule $r0$ and two additional rules, $\{b(0)\}$ and $b(1)$, the resulting program has 2 answer sets: $\{\{b(1), b(0), v(2)\},\{b(1), v(1)\}\}$.
Finally, we will also use the concept of \textit{projective solutions} of an answer set program $P$ onto a set of atoms $B$ (defined as in~\citep{gebser2009projective}): $AS_{B}(P) = \{A \cap B \mid A \in AS(P)\}$.

\subsection{\revisione{Probabilistic Answer Set Programming}}
\label{subsec:pasp}
Uncertainty in Logic Programming can be represented with discrete Boolean probabilistic facts of the form $\Pi::a$ where $\Pi \in [0,1]$ and $a$ is an atom that does not appear in the head of rules.
These are considered independent: this assumption may seem restrictive but, in practice, the same expressivity of Bayesian networks can be achieved by means of rules and extra atoms~\citep{Rig23-BKaddress}.
Every probabilistic fact corresponds to a Boolean random variable.
With probabilistic facts, a normal logic program becomes a \textit{probabilistic} logic program.
One of the most widely adopted semantics in this context is the Distribution Semantics (DS)~\citep{DBLP:conf/iclp/Sato95}.
Under the DS, each of the $n$ probabilistic facts can be included or not in a \textit{world} $w$, generating $2^n$ worlds.
\revisione{Every world is a normal logic program.}
The DS requires that every world has a unique model.
The probability of a world $w$ is defined as
$
P(w) = \prod_{f_i \in w} \Pi_i \cdot \prod_{f_i \not \in w} (1 - \Pi_i).
$

If we extend an answer set program with probabilistic facts, we obtain a \textit{probabilistic} answer set program that we interpret under the credal semantics (CS)~\citep{DBLP:conf/ilp/CozmanM16,cozman2020pasp}.
In the following, when we write ``probabilistic answer set program'', we assume that the program follows the credal semantics.
Similarly to the DS, the CS defines a probability distribution over worlds.
However, every world (which is an answer set program in this case) may have 0 or more stable models.
\revisione{A query $q$ is a conjunction of ground literals.}
The probability $P(q)$ of a query $q$ lies in the range $[\lowerprob(q), \upperprob(q)]$ where
\begin{equation}
\label{eq:lower_upper_prob}
\begin{split}
\lowerprob(q) &= \sum_{w_i \ \text{such that} \ \forall m \in AS(w_i), \ m \models q} P(w_i), \\
\upperprob(q) &= \sum_{w_i \ \text{such that} \ \exists m \in AS(w_i), \ m \models q} P(w_i).
\end{split}
\end{equation}
That is, the lower probability is given by the sum of the probabilities of the worlds where the query is true in every answer while the upper probability is given by the sum of the probabilities of the worlds where the query is true in at least one answer set. 
Here, as usual, we assume that all the probabilistic facts are independent and that they cannot appear as heads of rules.
\revisione{In other words, the lower (upper) probability is the sum of the probabilities of the worlds from which the query is a cautious (brave) consequence. }
Conditional inference \revisione{means computing upper and lower bounds for the probability of a query $q$ given evidence $e$, which is usually given as a conjunction of ground literls.}
The formulas are~\citep{cozman2017semantics}:
\begin{equation}
\label{eq:lower_upper_conditional}
\begin{split}
\lowerprob(q \mid e) &= \frac{\lowerprob(q,e)}{\lowerprob(q,e) + \upperprob(\lnot q,e)} \\
\upperprob(q \mid e) &= \frac{\upperprob(q,e)}{\upperprob(q,e) + \lowerprob(\lnot q,e)}    
\end{split}
\end{equation}
\revisione{Conditional probabilities are undefined if the denominator is 0.}


\begin{example}
\label{ex:pasp_example}
The following probabilistic answer set program has 2 probabilistic facts, $0.3::a$ and $0.4::b$:
\normalfont{
\begin{lstlisting}
0.3::a. 0.4::b.
q0 ; q1:- a. q0:- b.
\end{lstlisting}
}
There are $2^2 = 4$ worlds: $w_0 = \{not\ a, not\ b\}$, $w_1 = \{not\ a, b\}$, $w_2 = \{a, not\ b\}$, and $w_3 = \{a,b\}$ (with $not\ f$ we mean that the fact $f$ is absent, not selected), whose probability and answer sets are shown in Table~\ref{tab:world_as_pasp}.
For example, the answer sets for $w_2$ are computed from the following program: $a.\ q0 ; q1 \impl a.\ q0 \impl b.$
If we consider the query $q0$, it is true in all the answer sets of $w_1$ and $w_3$ and in one of those of $w_2$, thus we get $[P(w_1) + P(w_3), P(w_1) + P(w_2) + P(w_3)] = [0.4,0.58]$ as probability bounds. {$\square$}
\end{example}
DS and CS can be given an alternative but equivalent definition based on sampling in the following way.
We repeatedly sample worlds by sampling every probabilistic fact obtaining a normal logic program or an answer set program.
Then, we compute its model(s) and we verify whether the query is true in it (them).
The probability of the query under DS is the fraction of worlds in whose model the query is true as the number of sampled worlds tends to infinity.
For CS the lower probability is the fraction of worlds where the query is true in all models and the upper probability is the fraction of worlds where the query is true in at least one model.
We call this the \textit{sampling semantics} and it is equivalent to the DS and CS definitions by the law of large numbers.

\begin{table}[tb]
\centering
\caption{Worlds, probabilities, and answer sets for Example~\ref{ex:pasp_example}. }
\label{tab:world_as_pasp}
\begin{tabular}{|c | c | c | c | c|}
\ id \ & \ $a$ \ & \ $b$ \ & \ $P(w)$ \ & \ $AS(w)$ \ \\  
\hline
\hline
$w_0$ & 0 & 0 & $0.7\cdot 0.6 = 0.42$ & $\{\{\}\}$ \\
$w_1$ & 0 & 1 & $0.7 \cdot 0.4 = 0.28$ & $\{\{q0,b\}\}$ \\
$w_2$ & 1 & 0 & $0.3 \cdot 0.6 = 0.18$ & $\{\{a,q0\},\{a,q1\}\}$ \\
$w_3$ & 1 & 1 & $0.3 \cdot 0.4 = 0.12$ & $\{\{a,b,q0\}\}$ \\
\hline
\end{tabular} 
\end{table}

The complexity of the CS has been thoroughly studied by~\cite{cozman2020complexity}.
In particular, they focus on analyzing the cautious reasoning (CR) problem: given a PASP $P$, a query $q$, evidence $e$, and a value $\gamma \in \mathbb{R}$, the result is positive if $\lowerprob(q \mid e) > \gamma$, negative otherwise (or if $\lowerprob(e) = 0$).
A summary of their results is shown in Table~\ref{tab:complexity_pasp}.
For instance, in PASP with stratified negation and disjunctions in the head, the complexity of the CR problem is in the class $PP^{\Sigma_2^p}$, where $PP$ is the class of the problems that can be solved in polynomial time by a probabilistic Turing machine~\citep{doi:10.1137/0206049}.
The complexity is even higher if aggregates are allowed. 
\begin{table}[tb]
\centering
\caption{\revisione{Tight complexity bounds} of the CR problem in PASP from~\citep{cozman2020complexity}.
\textit{not stratified} denotes programs with stratified negations and $\vee$ disjunction in the head.
}
\label{tab:complexity_pasp}
\begin{tabular}{|c | c | c |} 
Language & Propositional & Bounded Arity \\  
\hline
\hline
\{\} & PP & $PP^{NP}$ \\
\{not stratified\} & PP & $PP^{NP}$ \\
\{not\} & $PP^{NP}$ & $PP^{\Sigma_2^p}$ \\
\{$\vee$\} & $PP^{NP}$ & $PP^{\Sigma_2^p}$ \\
\{not stratified, $\vee$\} & $PP^{\Sigma_2^p}$ & $PP^{\Sigma_3^p}$ \\
\{not, $\vee$\} & $PP^{\Sigma_2^p}$ & $PP^{\Sigma_3^p}$ \\
\hline
\end{tabular}
\end{table}

\subsection{\revisione{ProbLog and Hybrid ProbLog}}
\label{subsec:hybrid_problog}
\revisione{
A ProbLog~\citep{DBLP:conf/ijcai/RaedtKT07} program is composed by a set of Boolean probabilistic facts as described in Section~\ref{subsec:pasp} and a set of definite logical rules, and it is interpreted under the Distribution Semantics.
The probability of a query $q$ is computed as the sum of the probabilities of the worlds where the query is true: every world has a unique model so it is a sharp probability value.
}

\revisione{
\cite{DBLP:conf/ilp/GutmannJR10} proposed Hybrid ProbLog, an extension of ProbLog with continuous facts of the form
$$
(X,\phi)::b
$$
where $b$ is an atom, $X$ is a variable appearing in $b$, and $\phi$ is a special atom indicating the continuous distribution followed by $X$.
An example of continuous fact is $(X,gaussian(0,1))::a(X)$, stating that the variable $X$ in $a(X)$ follows a gaussian distribution with mean 0 and variance 1.  
A Hybrid ProbLog program $P$ is a pair $(R,T)$ where $R$ is a set of rules and $T = T^c \cup T^d$ is a finite set of continuous ($T^c$) and discrete ($T^d$) probabilistic facts.
The value of a continuous random variable $X$ can be compared only with constants through special predicates: $below(X,c_0)$ and $above(X,c_0)$, with $c_0 \in \mathbb{R}$, that succeed if the value of $X$ is respectively less than or greater than $c_0$, and $between(X,c_0,c_1)$, with $c_0, c_1 \in \mathbb{R}$, $c_0 < c_1$, that succeeds if $X \in [c_0,c_1]$.
}

\revisione{
The semantics of Hybrid ProbLog is given in~\citep{DBLP:conf/ilp/GutmannJR10} in a proof theoretic way.
A vector of samples, one for each continuous variable, defines a so-called continuous subprogram, so the joint distribution of the continuous random variables defines a joint distribution (with joint density $f(\mathbf{x})$) over continuous subprograms.
An interval $I \in \mathbb{R}^n$, where $n$ is the number of continuous facts, is defined as the cartesian product of an interval for each continuous random variable and the probability $P(\mathbf{X} \in I)$ can be computed by integrating $f(\mathbf{x})$ over $I$, i.e.,
$$
P(\mathbf{X} \in I) = \int_{I} f(\mathbf{x}) \ d\mathbf{x}.
$$
Given a query $q$, an interval $I$ is called \textit{admissible} if, for any $\mathbf{x}$ and $\mathbf{y}$ in $I$ and for any truth value of the probabilistic facts, the truth value of the $q$ evaluated in the program obtained by assuming $\mathbf{X}$ takes value $\mathbf{x}$ and $\mathbf{y}$ is the same.
In other words, inside an admissible interval, the truth value of a query is not influenced by the values of the continuous random variables i.e., it is always true or false once the value of the discrete facts is fixed.
For instance, if we have the continuous fact
$(X,gaussian(0,1))::a(X)$ 
and a rule 
$q \impl a(X), between(X,0,2)$,
the interval $[0,3]$ is not admissible for the query $q$ while $[0.5,1.5]$ is (since for any value of $X \in [0.5,1.5]$, $between(X,0,2)$ is always true).
A partition is a set of intervals and it is called admissible if every interval is admissible.
The probability of a query is defined by extracting its proofs together with the probabilistic facts, continuous facts, and comparison predicates used in the proofs.
These proofs are finitely many, since the programs do not include function symbols, and generalize the infinitely many proofs of a hybrid program (since there can be infinitely many values for continuous variables).
\cite{DBLP:conf/ilp/GutmannJR10} proved that an admissible partition exists for each query having a finite number of proofs and the probability of a query does not depend on the admissible partition chosen.
They also propose an inference algorithm that first computes all the proofs for a query.
Then, according to the continuous facts and comparison predicates involved, it identifies the needed partitions.
To avoid counting multiple times the same contribution, the proofs are made disjoint.
Lastly, the disjoint proofs are converted into a binary decision diagram (BDD) from which the probability can be computed by traversing it bottom up (i.e., with the algorithm adopted in ProbLog~\citep{DBLP:conf/ijcai/RaedtKT07}). 
}

\revisione{
Let us clarify it with a simple example.
}
\begin{example}
\revisione{
The following Hybrid ProbLog program has one discrete fact and one continuous fact.
}
\begin{lstlisting}
0.4::a.
(X,gaussian(0,1))::b(X).
q:- a.
q:- b(X), above(X,0.3).
\end{lstlisting}
\revisione{
Consider the query $q$.
It has two proofs: $a$ and $b(X), above(X,0.3)$.
}
\revisione{
The probability of $q$ is computed as $P(a) \cdot P(X > 0) + P(a) \cdot (1 - P(X > 0)) + (1 - P(a)) \cdot P(X > 0) = 0.4\cdot 0.5 + 0.4 \cdot 0.5 + 0.6 \cdot 0.5 = 0.7$. $\square$
}
\end{example}

\revisione{
The Hybrid ProbLog framework imposes some restrictions on the continuous random variables and how they can be used in programs, namely:
i) comparison predicates must compare variables with numeric constants,
ii) arithmetic expressions involving continuous random variables are not allowed, and 
iii) terms iside random variables definitions can be used only inside comparison predicates.
These restrictions allow the algorithm for the computation of the probability to be relatively simple.
}



\section{Hybrid Probabilistic Answer Set Programming}
\label{sec:hpasp}
Probabilistic logic programs that combine both discrete and continuous random variables are usually named \textit{hybrid}\footnote{In ASP terminology, the word ``hybrid'' is usually adopted to describe an extension of ASP, such as the one by~\citep{janhunen2017clingolinear}. Here we use the word hybrid exclusively to denote the presence of discrete and continuous random variables.}.
In this paper, we adopt the same adjective to describe probabilistic answer set programs with both discrete and continuous random variables, thus introducing \textit{hybrid} probabilistic answer set programs.
We use the acronym (H)PASP to indicate both (Hybrid) Probabilistic Answer Set Programming and (hybrid) probabilistic answer set program; the meaning will be clear from the context.

Without loss of generality, we consider only ground discrete and continuous probabilistic facts.
With the syntax
\begin{lstlisting}
f : distribution.
\end{lstlisting}
where $f$ is ground, we indicate a continuous random variable $f$ that follows the distribution specified by $distribution$.
For example, to state that $a$ is a continuous random variable that follows a Gaussian distribution with mean 2 and variance 1 we write
\begin{lstlisting}
a : gaussian(2,1).
\end{lstlisting}

\begin{definition}
A \textit{hybrid} probabilistic answer set program is a triple $(D,C,R)$ where $D$ is a finite set of ground independent probabilistic facts, $C$ is a finite set of continuous random variables definitions, and $R$ is a set of rules with none of the atoms in $D$ or $C$ in the head. $\square$
\end{definition}

As in Hybrid ProbLog, we reserve some special predicates (that we call \textit{comparison predicates}) to compare the value of the random variables with numeric ($\in \mathbb{R}$) constants: 
$\mathit{above}(\mathit{Var},\mathit{value})$, which is true if the value for the variable $\mathit{Var}$ is greater than the numeric value $\mathit{value}$ ($\mathit{Var} > \mathit{value}$);
$\mathit{below}(\mathit{Var},\mathit{value})$ which is true if the value for the variable $\mathit{Var}$ is less than the numeric value $\mathit{value}$ ($\mathit{Var} < \mathit{value}$);
$\mathit{between}(\mathit{Var},l,u)$ which is true if the value for the variable $\mathit{Var}$ is between the range defined by the numeric values $l$ and $u$ (i.e., $\mathit{Var} > l$ and $\mathit{Var} < u$);
$\mathit{outside}(\mathit{Var},l,u)$ which is true if the value for the variable $\mathit{Var}$ is outside the range defined by the numeric values $l$ and $u$ (i.e., $\mathit{Var} < l$ or $\mathit{Var} > u$).
Given a value for a random variable, we can evaluate the atoms for the comparison predicates.
For example, if variable $a$ has value 1.2, $\mathit{below}(a,1.8)$ is true, $\mathit{above}(a,1.8)$ is false, $\mathit{between}(a,1.1,1.8)$ is true, and $\mathit{outside}(a,1.1,1.8)$ is false.
In our framework, we consider the same restrictions of~\citep{DBLP:conf/ilp/GutmannJR10}.
\revisione{Note that $outside/3$ literals are not allowed by Hybrid ProbLog.}

We can extend the sampling semantics for the CS to hybrid probabilistic answer set programs: we sample every probabilistic fact and a value for every continuous variable, \revisione{obtaining sample $s=(d,c)$, where $d$ is the sample for discrete facts and $c$ is the sample for contiunous random variables.
From $s$, we build a \textit{hybrid world} $w=(P,c)$ where $P$ is obtained by adding to $R$ each fact $f_i$ for which $p_i::f_i \in D$ and $d_i=1$ in vector $d$.} 
We then ground the rules and check the sampled values for the continuous random variables against every comparison predicate that contains the variable: 
i) if the comparison is false, we remove the grounding of the rule or 
ii) if the comparison is true, we remove the comparison predicate from the body of the ground rule.
We are left with an answer set program $w^{'}$, and we can compute its stable models.
The lower and upper probability of a query $q$ are given, as in the sampling semantics for CS, as the fraction of worlds where the query is true in every answer set and in at least answer set, respectively, as the number of samples goes to infinity.
That is,
\begin{equation}
\label{eq:lower_upper_prob_sampling}
\begin{split}
  \lowerprob(q) &= \lim_{n \to \infty} \frac{\sum_{i = 1}^n \mathbbm{1}(\forall m \in AS(w_{i}^{'}), m \models q)}{n} \\
  \upperprob(q) &= \lim_{n \to \infty} \frac{\sum_{i = 1}^n \mathbbm{1}(\exists m \in AS(w_{i}^{'}), m \models q)}{n} \\
\end{split}
\end{equation}
where $\mathbbm{1}$ is the indicator function, returning 1 if its argument is true, 0 otherwise, and $AS(w_{i}^{'})$ is the set of answer sets of the program $w_{i}^{'}$ obtained from the $i$-th sample.
We call this the \textit{hybrid sampling semantics}.

To better specify the syntax, let us introduce an example.
\begin{example}
\label{ex:blood_pressure}
Consider a medical domain where we are interested in computing the probability of stroke given the values of the blood pressure of some individuals.
{
\normalfont
\begin{lstlisting}
0.4::pred_d(1..4).
0.6::pred_s(1..4).
d(1..4):gamma(70,1).
s(1..4):gamma(120,1).

prob_d(P):- outside(d(P), 60, 80).
prob_s(P):- outside(s(P), 110, 130).

prob(P):- prob_d(P), pred_d(P).
prob(P):- prob_s(P), pred_s(P).

stroke(P);not_stroke(P):- prob(P).

:- #count{X:prob(X)}=P, 
   #count{X:stroke(X),prob(X)}=S,
   10*S < 4*P.

high_number_strokes:- 
   #count{X : stroke(X)}=CS, CS > 1.
\end{lstlisting}
}
For ease of explanation and conciseness, we use the syntax $a(l..u)$ with $l, u \in \mathbb{N}$, $l < u$, to denote a set of atoms $a(l), a(l+1), \dots, a(u)$.
The program considers 4 people indexed with 1, 2, 3, and 4.
The first two lines introduce 8 discrete probabilistic facts, $pred\_d(i)$, $i \in \{1,\dots,4\}$ (pred is short for predisposition) which are true with probability 0.4 and $pred\_s(i)$, $i \in \{1,\dots,4\}$ which are true with probability 0.6, and 8 continuous random variables ($d(1), \dots, d(4), s(1), \dots, s(4)$ where $d$ stands for diastolic and $s$ for systolic) where the first four follow a gamma distribution with shape 70 and rate 1 and while the remaining  follow a gamma distribution with shape 120 and rate 1.
$prob\_d/1$ indicates a problem from the diastolic pressure and it is true if there is a problem coming from the diastolic pressure if its value is below 60 or above 80.
Similarly for $prob\_s/1$.
In general, a person has a blood pressure problem ($prob/1$) she has a diastolic or systolic pressure problem and there is a predisposition for it (either $pred\_d$ or $pred\_s$).
A person having a blood pressure problem can have a stroke or not.
The constraint states that at least 40\% of people that have a pressure problem also have a stroke.
Finally, we may be interested in computing the probability that more than one person has a stroke ($high\_number\_strokes/0$). {$\square$}
\end{example}

The comparison predicates subdivide the domain of a random variable into disjoint and exhaustive intervals $I_1 \cup I_2 \cup \dots \cup I_m$.
The extremes of the disjoint intervals are obtained by selecting all constants appearing in comparison predicates for continuous random variables, removing the duplicates, and ordering them in increasing order.
In this way we obtain a list of values $[b_1\dots,b_{m+1}]$ where $b_1=-\infty$ and $b_{m+1}=+\infty$ such that $I_k=[b_k,b_{k+1}]$ for $k=1,\dots,m$.
This process is described in Algorithm 2 of~\citep{DBLP:conf/ilp/GutmannJR10}.
\begin{example}
\label{ex:example_interval}
Consider the following simple program: 
\normalfont{
\begin{lstlisting}
0.4::b. a:gaussian(0,1).
q0 ; q1:- below(a,0.5).
q0:- below(a,0.7), b.
\end{lstlisting}
}
\noindent
There are 3 intervals to consider: $I^a_1 = ]-\infty,0.5]$, $I^a_2 = [0.5,0.7]$, and $I^a_3 = [0.7,+\infty[$. {$\square$}
\end{example}

\begin{table}[tb]
\caption{Worlds and probabilities for Example~\ref{ex:example_interval}. $f_1$ and $f_2$ are the two probabilistic facts obtained by the discretization process of the continuous probabilistic fact $a$ into three intervals. 
The column LP/UP indicates whether the world contributes only to the upper probability (UP) or also to the lower probability (LP + UP).
}
\label{tab:tab_ex_interval}
\centering
\begin{tabular}{|c | c | c | c | c | c|} 
\ id \ & \ $b$ \ & \ $f_1$ \ & \ $f_2$ \ & \ LP/UP \ & \ $P(w)$ \ \\  
\hline
\hline
0 & 0 & 0 & 0 & -       & 0.145 \\
1 & 0 & 0 & 1 & -       & 0.040 \\
2 & 0 & 1 & 0 & UP      & 0.325 \\
3 & 0 & 1 & 1 & UP      & 0.089 \\
  4 & 1 & 0 & 0 & -       & 0.097 \\
  5 & 1 & 0 & 1 & LP + UP & 0.027 \\
  6 & 1 & 1 & 0 & LP + UP & 0.217 \\
  7 & 1 & 1 & 1 & LP + UP & 0.060 \\
\end{tabular}
\end{table}

We transform a HPASP $P_c$ into a PASP $P_c^d$ via a process that we call \textit{discretization}.
The probability of a query $q$ in $P_c$ is the same as the probability asked in $P_c^d$. 
Thus, using discretization, inference in HPASP is performed using inference in PASP.
We now show how to generate the discretized program and then we prove the correctness of the transformation.



We need to make the rules involving comparison predicates considering more than one interval disjoint, to avoid counting multiple times the same contribution.
To do so, we can first convert all the comparison predicates $\mathit{between}/3$ and $\mathit{outside}/3$ into a combination of $\mathit{below}/2$ and $\mathit{above}/2$.
That is, every rule, 
$$
h\impl \ l_1,\dots,\mathit{between}(a,lb,ub),\dots,l_n.
$$
is converted into 
$$
h\impl \ l_1,\dots,\mathit{above}(a,lb), \mathit{below}(a,ub),\dots,l_n.
$$
The conversion of $\mathit{outside}(a,lb,ub)$ requires introducing two rules.
That is, 
$$
h\impl \ l_1, \dots, \mathit{outside}(a,lb,ub), \dots,l_n.
$$
requires generating two rules
\begin{equation*}
\begin{split}
h_a&\impl \ l_1, \dots, \mathit{above}(a,ub), \dots, l_n. \\ 
h_b&\impl \ l_1, \dots, \mathit{below}(a,lb), \dots, l_n.
\end{split}
\end{equation*}
If there are multiple comparison predicates in the body of a rule, the conversion is applied multiple times, until we get rules with only $\mathit{above}/2$ and $\mathit{below}/2$ predicates.
After this, for every continuous variable $f_j$ with $m$ associated intervals we introduce a set of clauses~\citep{DeR-NIPS08,DBLP:conf/ilp/GutmannJR10}
\begin{align}
\label{eq:aux_clauses}
\begin{split}
&h_1^j \impl \ f_{j1}. \\
&h_2^j \impl \ not\ f_{j1}, f_{j2}.\\
&\dots \\
&h_{m-1}^j \impl \ not\ f_{j1}, not \ f_{j2}, \dots, f_{jm-1}. \\
&h_m^j \impl \ not\ f_{j1}, not \ f_{j2}, \dots, not \ f_{jm-1}.  
\end{split}
\end{align}
where each $h_i^j$ is a propositional atom that is true if the continuous variable $f_j$ takes value in the interval $I_i^{f_j}$ and each $f_{jk}$ is a fresh probabilistic fact with probability $\pi_{jk}$ for $k=1,\dots,m-1$ computed as
\begin{equation}
\label{eq:probability_computation_facts}
\pi_{jk} = \revisione{\frac{P(b_k \leq f_j \leq b_{k+1})}{1 - P(f_j \leq b_k)}} = \frac{\int_{b_k}^{b_{k+1}} p_j(x_j) \ dx_j}{1 - \int_{-\infty}^{b_k} p_j(x_j) \ dx_j} 
\end{equation}
where $p_j(x_j)$ is the probability density function of the continuous random variable $f_j$ and $b_k$ for $k=1,\dots,m$ are the bounds of the intervals.
After this step, we identify the comparison atoms that are true in more than one interval.
A clause containing a comparison atom $below(f_j,b_{k+1})$ is replicated $k$ times, once for each of the intervals $I_l$ with $l=1,\dots,k$.
The comparison atom of the $k$-th replicated clause is replaced by $h_k^j$.
Similarly for $above/2$.
If a clause contains comparison atoms on multiple variables, this process is repeated multiple times.
That is, if a clause contains $n_c$ comparison atoms on the variables $v_1,\dots,v_n$ that are true in $k_1,\dots,k_n$ intervals, we get $k_1 \times \dots \times k_n$ clauses.
Note that the complexity of inference is dominated by the number of probabilistic facts, rather than the number of clauses.

Algorithm~\ref{alg:discretize} depicts the pseudocode for the discretization process applied to a HPASP with ground rules $R$, continuous random variable definitions $C$, and discrete probabilistic facts $D$.
\revisione{It is composed of three main functions, \textsc{ConvertBetweenOutside}, \textsc{HandleIntervals}, and \textsc{HandleComparisonAtoms} that convert the between and outside comparison predicates, compute the number of intervals and the new discrete probabilistic facts, and manage the comparison atoms, respectively.}
Functions \textsc{ConvertBetween} and \textsc{ConvertOutside} convert respectively the comparison predicates $between/3$ and $outside/3$ into combinations of $above/2$ and $below/2$.
Function \textsc{ComputeIntervals}($c$) computes the intervals for the continuous random variable $c$.
Function \textsc{ComputeProbability}($c,i$) computes the probability for the $i$-th probabilistic fact for the continuous random variable $c$.
Function \textsc{GetComparisonAtoms}($R$) returns all the comparison atoms in the rules $R$ of the program.
Function \textsc{ComputeIntervalsComparison}($c$) computes the number of intervals involved in the comparison atom $c$.
Function \textsc{Replace}($a,h_i$) replaces the comparison atom $a$ with the corresponding discrete fact $h_i$ representing the $i$-th interval.

\begin{theorem}
\revisione{
Consider an hybird probabilistic answer set program $P$.
Let
$n_r$ be the number of rules,
$n_c$ be the number of continuous facts,
$n_k$ be the number of comparison atoms in the program,
$n_i$ be the maximum number of intervals for a continuous fact,
$o$ be the maximum number of $outside/3$ atoms in the body of a rule, and
$i_c$ be the maximum number of intervals where a comparison atom is true.
Then, Algorithm~\ref{alg:discretize} requires $O(n_r^o + n_c \cdot n_i + n_k \cdot n_r \cdot i_c)$ time.}
\end{theorem}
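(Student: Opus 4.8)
The plan is to bound the running time of each of the three top-level functions separately and then add the bounds, since \textsc{ConvertBetweenOutside}, \textsc{HandleIntervals}, and \textsc{HandleComparisonAtoms} are invoked in sequence and the total cost is their sum. First I would fix the accounting by assigning each summand to one function: \textsc{ConvertBetweenOutside} is responsible for the $n_r^o$ term, \textsc{HandleIntervals} for the $n_c \cdot n_i$ term, and \textsc{HandleComparisonAtoms} for the $n_k \cdot n_r \cdot i_c$ term.

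For \textsc{ConvertBetweenOutside} I would treat the two conversions separately. Replacing each $between/3$ atom by the pair $above/2$, $below/2$ acts within a single rule and never creates new rules, so over the whole program it costs $O(n_k)$, which is dominated by the other terms. The $outside/3$ conversion is the source of the blow-up: each $outside/3$ atom in a rule body splits that rule into two, so a rule carrying up to $o$ such atoms expands into at most $2^o$ rules, and the whole program grows to at most $n_r \cdot 2^o$ rules. The decisive step is the elementary inequality $n_r \cdot 2^o = O(n_r^o)$: assuming $n_r \ge 2$ we have $n_r^{o-1} \ge 2^{o-1}$, hence $2^o \le 2\,n_r^{o-1}$ and therefore $n_r \cdot 2^o \le 2\,n_r^o$. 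Since the work performed is proportional to the number of rules produced, this function runs in $O(n_r^o)$.

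For \textsc{HandleIntervals} I would argue by a direct double loop: the function iterates over the $n_c$ continuous facts and, for each, calls \textsc{ComputeIntervals} and then, for each of the at most $n_i$ resulting intervals, calls \textsc{ComputeProbability} and emits one of the auxiliary clauses of (\ref{eq:aux_clauses}); treating each probability evaluation and clause emission as unit cost yields $O(n_c \cdot n_i)$. For \textsc{HandleComparisonAtoms} I would iterate over the $n_k$ comparison atoms returned by \textsc{GetComparisonAtoms}, where each atom is true in at most $i_c$ intervals, so its containing clause is replicated at most $i_c$ times via \textsc{Replace}, and locating and rewriting the relevant clauses requires scanning up to $n_r$ rules, giving $O(n_k \cdot n_r \cdot i_c)$. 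Summing the three sequential contributions gives $O(n_r^o + n_c \cdot n_i + n_k \cdot n_r \cdot i_c)$, as claimed.

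The part I expect to be most delicate is the $outside/3$ analysis: one has to observe that the splitting is \emph{multiplicative} per rule (producing $2^o$, not $o$, copies) and then justify that this exponential-in-$o$ factor is nonetheless absorbed by the stated polynomial $n_r^o$ through the inequality above. A secondary subtlety is confirming that the clause replication in \textsc{HandleComparisonAtoms}, when several comparison atoms share a rule, does not exceed the claimed bound; here I would lean on the remark preceding the theorem that the overall complexity is dominated by the number of probabilistic facts rather than the number of clauses, so that combining the per-atom replication factor $i_c$ with a scan of the $n_r$ rules is enough to account for all the work.
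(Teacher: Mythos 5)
Your proof is correct and takes essentially the same approach as the paper's: bound each of the three sequentially invoked functions by counting loop iterations and sum the results, obtaining exactly the three terms $O(n_r^o)$, $O(n_c \cdot n_i)$, and $O(n_k \cdot n_r \cdot i_c)$. If anything, your treatment of \textsc{ConvertBetweenOutside} is more rigorous than the paper's, which merely asserts that the $outside/3$ while loop executes $n_r^o$ times, whereas you derive the actual multiplicative blow-up ($2^o$ rules per original rule, hence $O(n_r \cdot 2^o)$ iterations) and absorb it into $O(n_r^o)$ via the inequality $n_r \cdot 2^o \le 2\,n_r^o$ for $n_r \ge 2$.
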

\begin{proof}
\revisione{
Let us analyze the three functions used in \textsc{Discretize}.
For function \textsc{ConvertBetweenOutside}, the body of the loop at lines~\ref{alg_line:start_convert_between}--\ref{alg_line:end_convert_between} is executed $n_r$ times.
The while loop at lines~\ref{alg_line:start_while_outside}--\ref{alg_line:end_while_outside} is executed $n_r^o$ times.
Thus, function \textsc{ConvertBetweenOutside} has complexity $O(n_r^o)$.
Function \textsc{HandleIntervals} has complexity $O(n_c \cdot n_i)$.
In function \textsc{HandleComparisonAtoms}, the loop at lines~\ref{alg_line:loop_cp}--\ref{alg_line:loop_cp_end} is executed $n_k$ times.
For each of them, the loop at lines~\ref{alg_line:loop_pf}--\ref{alg_line:loop_pf_end} is executed $n_r$ times.
Lastly, the innermost loop at lines~\ref{alg_line:loop_intervals}--\ref{alg_line:loop_intervals_end} is executed at most $i_c$ times.
So, function \textsc{HandleComparisonAtoms} has complexity $O(n_k \cdot n_r \cdot i_c)$.
Overall, function \textsc{Discretize} has complexity $O(n_r^o + n_c \cdot n_i + n_k \cdot n_r \cdot i_c)$.}
\end{proof}
Note that the complexity of inference in probabilistic answer set programs is very high (see Table~\ref{tab:complexity_pasp}), so the discretization process has a small impact on the overall process.

\begin{algorithm}[t]
\begin{scriptsize}
\caption{Function \textsc{Discretize}: discretization of a hybrid probabilistic answer set program with rules $R$, continuous probabilistic facts $C$, and discrete probabilistic facts $D$.}
\label{alg:discretize}
\begin{algorithmic}[1]
\Function{Discretize}{$R,C,D$}
\State $R^d \gets$ \Call{ConvertBetweenOutside}{$R$}
\State $R^d, D \gets$ \Call{HandleIntervals}{$R^d,C,D$}
\State $R^d \gets$ \Call{HandleComparisonAtoms}{$R^d$}
\State \Return $R^d, D$
\EndFunction
\\ \hrulefill
\Function{ConvertBetweenOutside}{$R$}
  \State $R^d \gets \{\}$
  \ForAll{$r \in R$} \label{alg_line:start_convert_between}
    \State $r_c \gets$ \Call{ConvertBetween}{$r$}
    \State $R^d \gets R^d \cup r_c$
  \EndFor \label{alg_line:end_convert_between}
  \While{$\exists r \in R^d : outside/3 \in r$} \label{alg_line:start_while_outside}
    \State $r_a, r_b \gets$ \Call{ConvertOutside}{$r$}
    \State $R^d \gets R^d \setminus \{r\} \cup \{r_a, r_b\}$
  \EndWhile \label{alg_line:end_while_outside}
  \State \Return $R^d$
\EndFunction
\\ \hrulefill
\Function{HandleIntervals}{$R^d,C,D$}
  \ForAll{$c \in C$}
    \State $n_i \gets$ \Call{ComputeIntervals}{$c$} \Comment{Computation of the intervals for the continuous probabilistic fact $c$.}
    \For{$i \in \{1,\dots,n_i\}$}
        \State $f_i \gets$ \Call{ComputeProbability}{$c,i$} \Comment{$f_i$ is a fresh probabilistic fact for the current interval $i$.}
        \State $D \gets D \cup f_i$
        \State $R^d \gets R^d \cup \{h_i \leftarrow \bigwedge_{j = 1}^{i-1} not\ f_j\} \land f_i$
    \EndFor
  \EndFor
  \State \Return $R^d, D$
\EndFunction
\\ \hrulefill
\Function{HandleComparisonAtoms}{$R$}
\State $K \gets$\Call{GetComparisonAtoms}{$R$}
\ForAll{$a \in K$ } \Comment{Loop over all the comparison atoms.} \label{alg_line:loop_cp}
\State $i_c \gets$ \Call{ComputeIntervalsComparison}{$a$}
  \State $R^t \gets \{\}$
  \ForAll{$r \in R$} \label{alg_line:loop_pf}
    \If{$a \in R$}
      \For{$i \in \{1,\dots,i_a\}$} \Comment{Loop over the intervals that make the comparison atom $a$ true.} \label{alg_line:loop_intervals}
        \State $r_a \gets$\Call{Replace}{$r, a,h_i$}
        \State $R^t \gets R^t \cup \{r_a\}$
      \EndFor \label{alg_line:loop_intervals_end}
    \Else
      \State $R^t \gets R^t \cup \{r\}$
    \EndIf
  \EndFor \label{alg_line:loop_pf_end}
  \State $R \gets R^t$
\EndFor \label{alg_line:loop_cp_end}
\State \Return $R$
\EndFunction


\end{algorithmic}
\end{scriptsize}
\end{algorithm}


We can perform conditional inference using the formulas for discrete programs (Equation~\ref{eq:lower_upper_conditional}): if the evidence is on a discrete variable, we can directly apply that formula to the discretized program.
If the evidence is on a continuous variable, say $e = (X > k)$ with $k \in \mathbb{R}$, we first need to create a discretized version of the program by also taking into account this constraint.

To clarify, in Example~\ref{ex:example_interval} we have a variable $a$ with two numerical constraints on it: $\mathit{below}(a,0.5)$ and $\mathit{below}(a,0.7)$.
The first interval, $I_1^a$, is $]-\infty,0.5]$, the second, $I_2^a$, is $[0.5,0.7]$, and the third $[0.7,\infty[$.
After inserting the new probabilistic facts, $f_{a1}$ for $I_1^a$ and $f_{a2}$ for $I_2^a$, we add two more rules $h_1^a \impl f_{a1}.$ and $h_2^a \impl not \ f_{a1}, f_{a2}.$, we replicate the clauses with comparison predicates spanning more than one interval and replace the comparison predicates with the appropriate $h_i^a$ atom, to make them disjoint, as previously above.
For example, the comparison atom $\mathit{below}(a,0.7)$ of Example~\ref{ex:example_interval} is true in the intervals $I_1^a$ and $I_2^a$.
The clause containing it is duplicated and, in the first copy, we replace $\mathit{below}(a,0.7)$ with $h_1^a$, while in the second with $h_2^a$.
The other comparison atom, $\mathit{below}(a,0.5)$, is true in only one interval, so it is sufficient to replace it with $h_1^a$. 
This process is repeated for every continuous random variable.
Overall, we obtain the program shown in Example~\ref{ex:running_discretized}.
\begin{example}
\label{ex:running_discretized}
By applying the discretization process to Example~\ref{ex:example_interval}, we get that $\pi_{a1} = 0.6915$ and $\pi_{a2} = \frac{0.066}{1 - 0.6915} = 0.2139$.
The program becomes:
\normalfont
{
\begin{lstlisting}
0.4::b.
0.6915::a1.
0.2139::a2.
int1 :- a2.
int2 :- not a1, a2.
q0 ; q1:- int1.
q0:- int1, b.
q0:- int2, b.
\end{lstlisting}
}
We can now compute the probability of the query, for example, $q0$: $[\lowerprob(q0),\upperprob(q0)] = [0.303,0.718]$.
The worlds are shown in Table~\ref{tab:tab_ex_interval}.
Similarly, for the program of Example~\ref{ex:blood_pressure}, we get $P(high\_number\_strokes) = [0.256, 0.331]$. {$\square$}
\end{example}

We now show that the lower and upper probability of a query from the discretized program is the same as that from the hybrid sampling semantics.
\begin{theorem}
Given a query $q$, a hybrid program $P$, and its discretized version $P^d$, \revisione{the lower and upper probability of $q$ computed in the hybrid program ($\lowerprob(q)$ and $\upperprob(q)$) coincide with the lower and upper probability computed in its discretized version ($\lowerprob^{P^d}(q)$ and $\upperprob^{P^d}(q)$), i.e.,}
\begin{equation}
\begin{split}
\lowerprob(q) &= \lowerprob^{P^d}(q) \\
\upperprob(q) &= \upperprob^{P^d}(q)  
\end{split}
\end{equation}
\end{theorem}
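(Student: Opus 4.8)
The plan is to reduce the equality of probabilities to two facts: (i) the discretization reproduces, fact for fact, the probability that each continuous variable lands in each interval, and (ii) for every fixed assignment of the discrete facts together with a choice of interval for each continuous variable, the answer sets of the simplified hybrid world coincide (on the original vocabulary) with those of the corresponding discretized world. Combining these term by term yields the claim for both $\lowerprob$ and $\upperprob$.

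First I would rewrite the hybrid sampling probabilities of Equation~\ref{eq:lower_upper_prob_sampling}. A hybrid sample splits as $s=(d,c)$. By the way the interval bounds $[b_1,\dots,b_{m+1}]$ are constructed---collecting all constants appearing in comparison predicates---every comparison atom has a constant truth value on each interval $I_k$. Hence the reduced program $w'$ obtained by the hybrid sampling semantics depends on $c$ only through the vector of interval indices $\iota=(k_1,\dots,k_{n_c})$, so I may write $w'=w'_{d,\iota}$. Using the law of large numbers (already invoked to justify the sampling semantics), and the independence of the discrete facts and of the continuous variables, I would obtain
\[
\lowerprob(q)=\sum_{d}P(d)\sum_{\iota}\Big(\textstyle\prod_{j}P(f_j\in I_{k_j})\Big)\,\mathbbm{1}\big(\forall m\in AS(w'_{d,\iota}),\,m\models q\big),
\]
and the analogous expression for $\upperprob(q)$ with $\exists$ in place of $\forall$.

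The key lemma concerns the auxiliary construction of Equation~\ref{eq:aux_clauses}: with $\pi_{jk}$ as in Equation~\ref{eq:probability_computation_facts}, the atom $h_k^j$ is true in a discretized world exactly when $f_{j1}=\dots=f_{j,k-1}=0$ and (for $k<m$) $f_{jk}=1$, so $P(h_k^j)=\big(\prod_{l<k}(1-\pi_{jl})\big)\pi_{jk}$. Substituting $\pi_{jk}=P(I_k)/\sum_{l\ge k}P(I_l)$ and telescoping (a straightforward induction on $k$, using $\sum_l P(I_l)=1$) gives $P(h_k^j)=P(f_j\in I_k)$. I would also observe that each fresh fact $f_{jk}$ occurs \emph{only} inside the auxiliary clauses, so any fresh fact not needed to fix $h_k^j$ is a don't-care whose two values simply marginalize out, and that the interval events for distinct variables factorize; this is what makes the group of discretized worlds inducing a given $\iota$ carry total probability $\prod_j P(f_j\in I_{k_j})$.

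The hard part is the answer-set equivalence: for fixed $(d,\iota)$, the answer sets of the discretized world, restricted to the original atoms, equal $AS(w'_{d,\iota})$. I would argue that the replication-and-replacement step reconstructs each comparison atom: a comparison atom true exactly on $I_1,\dots,I_t$ is replaced, across the $t$ replicated copies, by $h_1^j,\dots,h_t^j$; since the intervals are disjoint and exhaustive, in the world realizing interval $I_{k_j}$ precisely one $h_l^j$ is true, and it is true iff the comparison holds on $I_{k_j}$. Thus a copy survives with its comparison atom satisfied exactly when the hybrid semantics keeps the rule (deleting the satisfied comparison atom), and every copy is killed exactly when the hybrid semantics deletes the rule---and the $between/3$ and $outside/3$ rewrites preserve this, the two $outside$ rules encoding the disjunction $f_j<lb \lor f_j>ub$. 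Finally, since the $h_k^j$ and $f_{jk}$ are fresh, do not occur in the query, and sit in a stratum below the original rules (the $h_k^j$ are defined by stratified clauses over the fixed facts $f_{jk}$), adding them is a conservative extension: the answer sets project bijectively onto $AS(w'_{d,\iota})$ and preserve $m\models q$. Plugging this equivalence and the key lemma into the two sum expressions makes them identical term by term, proving $\lowerprob(q)=\lowerprob^{P^d}(q)$ and $\upperprob(q)=\upperprob^{P^d}(q)$.
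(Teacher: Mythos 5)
Your proposal is correct and follows essentially the same route as the paper's own proof: the paper likewise combines (i) the clause-level correspondence showing that, for each sampled interval configuration, either all replicated clauses have false bodies or exactly one survives with its $h_k^j$ atoms true and removable, yielding the same reduced program $w'$ as the hybrid sampling semantics, and (ii) the telescoping computation $P(h_k^j)=\prod_{l<k}(1-\pi_{jl})\,\pi_{jk}=\int_{b_k}^{b_{k+1}}p_j(x)\,dx$ matching interval probabilities. Your write-up is somewhat more explicit about the marginalization of unused fresh facts and the conservative-extension/projection argument, but these are elaborations of the same two ingredients rather than a different approach.
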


\begin{proof}
Given a hybrid world $w$ and a clause $c$ that contains at least one comparison atom in the grounding of $P$, call $g_1,\dots,g_n$ the set of clauses in the grounding of $P^d$ generated from $c$.
There are two cases.
The first is that the samples for the continuous random variables do not satisfy the constraints in the body of $c$.
In this case, all the $g_i$ clauses will have a false body.
In the second case, there will be a single clause $g_i$ where all the $h^j_i$ atoms in the body are true.
These atoms can be removed, obtaining an answer set program $w^{'}$ that is the same as the one that would be obtained in the hybrid sampling semantics.
It remains to prove that the resulting probability distributions over the discretized worlds are the same.
We show that the probability of obtaining a program $w^{'}$ in the hybrid sampling semantics and in the sampling semantics from the discretized program is the same.
We need to prove that the probability of a continuous random variable $f_j$ of falling into the interval $I_k^{f_j}$ is the probability that in a world sampled from the discretized program atom $h_k^j$ is true.
The latter probability depends only on the probabilities of the $f_{jl}$ facts.
Thus it can be computed by observing that the $f_{jl}$ facts are mutually independent: since the part of the program defining $h_k^j$ is a stratified normal program, the lower and upper probabilities of $h_k^j$ coincide and, if $-\infty,b_2,\dots,b_{m-1},+\infty$ are the different bounds for the intervals they can be computed as:
\begin{equation}
\begin{split}
P(h_k^j) &= 
        P(not\ f_{j1}) \cdot 
        P(not\ f_{j2}) \cdot \dots \cdot 
        P(not f_{jk-1}) \cdot P(f_{jk}) \\
    & = \left(1 - \int_{-\infty}^{b_2} p_j(f_j)\ df_j\right) \cdot 
        \frac{1 - \int_{b_2}^{b_3} p_j(f_j) \ df_j}{1 - \int_{-\infty}^{b_2} p_j(f_j) \ df_j} \cdot \dots \cdot 
        \frac{\int_{b_k}^{b_{k+1}}  p_j(f_j) \ df_j}{1 - \int_{-\infty}^{b_k}  p_j(f_j) \ df_j}\\ 
    & = \left(1 - \int_{-\infty}^{b_k}  p_j(f_j) \ df_j\right) \cdot 
        \frac{\int_{b_k}^{b_{k+1}}  p_j(f_j) \ df_j}{1 - \int_{-\infty}^{b_k} p_j(f_j) \ df_j } \\
    & = \int_{b_k}^{b_{k+1}}  p_j(f_j) \ df_j
\end{split}
\end{equation}
\end{proof}

With our framework it is possible to answer queries in programs where variables follow a mixture of distributions.
For instance, in the following program
\begin{lstlisting}
0.4::c.
a:gaussian(10,3).
b:gaussian(9,2).
q0:- c, above(a,6.0). 
q0:- not c, above(b,6.0).  
\end{lstlisting}
we can compute the probability of the query $q0$ ($[\lowerprob(q0),\upperprob(q0)] = [0.923,0.923]$).
Here, if $c$ is true, we consider a gaussian distribution with mean 10 and variance 3, if $c$ is false, a gaussian distribution with mean 9 and variance 2.
In other words, we consider a variable that follows the first gaussian distribution if $c$ is true and the second gaussian distribution if $c$ is false.


One of the key features of ASP is the possibility of adding logical constraints that cut some of the possible answer sets.
However, when we consider a PASP (also obtained via discretization), constraints may cause loss of probability mass due to unsatisfiable worlds, since these do not contribute neither to the lower nor to the upper bound (Equation~\ref{eq:lower_upper_prob}). 
There can be constraints involving only discrete probabilistic facts, constraints involving only comparison atoms (thus, continuous random variables), and constraints involving both.
Let us discuss this last and more general case with an example.
\begin{example}
\label{ex:normalization_inconsistent_cont_discr}
Consider the program of Example~\ref{ex:example_interval} with the additional rule (constraint) $\impl b, below(a, 0.2)$.
When $b$ is true, the value of $a$ cannot be less than 0.2.
This results in a loss of probability mass in the discretized program.
The introduction of the constraint requires a more granular partition and an additional interval: $I^a_1 = ]-\infty,0.2]$, $I^a_2 = [0.2,0.5]$, $I^a_3 = [0.5,0.7]$, and $I^a_4 = [0.7,+\infty[$.
Note that the probabilities of the discretized facts will be different from the ones of Example~\ref{ex:example_interval}. {$\square$}
\end{example}
When every world is satisfiable, we have that~\citep{cozman2020pasp}:
\begin{equation}
\label{eq:sum_lower_upper}
\lowerprob(q)+\upperprob(not\ q) = 1  
\end{equation}
When at least one world is unsatisfiable, Equation~\ref{eq:sum_lower_upper} does not hold anymore, but we have $\lowerprob(q)+\upperprob(not\ q) + P(inc)=1$, where $P(inc)$ is the probability of the unsatisfiable worlds.
So we can still use the semantics but we need to provide the user also with $P(inc)$ beside $\lowerprob(q)$ and $\upperprob(q)$.
If we want to enforce Equation~\ref{eq:sum_lower_upper}, we can resort to normalization: we divide both the lower and the upper probability bounds of a query by the probability of the satisfiable worlds.
That is, call
\begin{equation}
\label{eq:normalization}
Z = \sum_{w_i \mid |AS(w_i)| > 0} P(w_i)
\end{equation}
then $P^n(q) = [\lowerprob^n(q),\upperprob^n(q)]$ with $\lowerprob^n(q) = \lowerprob(q) / Z, \ \upperprob^n(q) = \upperprob(q) / Z$.
This approach has the disadvantage that if $Z$ is 0 the semantics is not defined.
In Example~\ref{ex:normalization_inconsistent_cont_discr}, the normalizing factor $Z$ is 0.7683, thus the probability of $q0$ now is: $P(q0) = [0.093,0.633]$.
Note that the new bounds are not simply the bounds of Example~\ref{ex:running_discretized} divided by $Z$, since the constraint splits the domain even further, and the probabilities associated with the probabilistic facts obtained via discretization change.

\section{Algorithms}
\label{sec:alg}
In this section, we describe two exact and two approximate algorithms for performing inference in HPASP.

\subsection{Exact Inference}
\label{subsec:exact_inference}
\paragraph{\revisione{\textbf{Modifying the PASTA solver.}}}
We first modified the PASTA solver~\citep{AzzBellRig2022PASTA}\footnote{Code and datasets available on GitHub at \url{https://github.com/damianoazzolini/pasta} and on zenodo at \url{https://doi.org/10.5281/zenodo.11653976}.}, by implementing the conversion of the hybrid program into a regular probabilistic answer set program (Algorithm~\ref{alg:discretize}).
PASTA performs inference on PASP via projected answer set enumeration.
In a nutshell, to compute the probability of a query (without loss of generality assuming here it is an atom), the algorithm converts each probabilistic fact into a choice rule.
Then, it computes the answer sets projected on the atoms for the probabilistic facts and for the query.
In this way, PASTA is able to identify the answer set pertaining to each world.
For each world, there can be 3 possible cases:
i) a single projected answer set with the query in it, denoting that the query is true in every answer set, so this world contributes to both the lower and upper probability;
ii) a single answer set without the query in it: this world does not contribute to any probability; 
iii) two answer sets, one with the query and one without: the world contributes only to the upper probability.
There is a fourth implicit and possible case: if a world is not present, this means that the ASP obtained from the PASP by fixing the selected probabilistic facts is unsatisfiable.
Thus, in this case we need to consider the normalization factor of Equation~\ref{eq:normalization}.
PASTA already handles this by keeping track of the sum of the probabilities of the computed worlds.
The number of generated answer sets depends on the number of Boolean probabilistic facts and on the number of intervals for the continuous random variables, since every interval requires the introduction of a new Boolean probabilistic fact.
Overall, if there are $d$ discrete probabilistic facts and $c$ continuous random variables, the total number of probabilistic facts (after the conversion of the intervals) becomes $T = d + \sum_{i = 1}^{c}(k_i - 1)$, where $k_i$ is the number of intervals for the $i$-th continuous fact.
Finally, the total number of generated answer sets is bounded above by $2^{T+1}$, due to the projection on the probabilistic facts.
Clearly, generating an exponential number of answer sets is intractable, except for trivial domains.

\begin{example}[Inference with PASTA.]
\label{ex:inference_pasta}
Consider the discretized program described in Example~\ref{ex:running_discretized}.
It is converted into
\begin{lstlisting}
{a1}.
{a2}.
{b}.
int1 :- a2.
int2 :- not a1, a2.
q0 ; q1:- int1.
q0:- int1, b.
q0:- int2, b.
\end{lstlisting}
It has 10 answer sets projected on the atoms $q0/0$ and $a0/0$, $a1/0$, and $b/0$:
$AS_{1} = \{\}$,
$AS_{2} = \{a1\}$,
$AS_{3} = \{a2\}$,
$AS_{4} = \{a2,q0\}$,
$AS_{5} = \{b\}$,
$AS_{6} = \{a1,a2\}$,
$AS_{7} = \{a1,b\}$,
$AS_{8} = \{a2,b,q0\}$,
$AS_{9} = \{a1,a2,q0\}$, and 
$AS_{10} = \{a1,a2,b,q0\}$.
For example, the world where only $a2$ is true is represented by the answer sets $AS_3$ and $AS_4$.
The query $q0$ is present only in one of the two, so this world contributes only to the upper probability.
The world where $a1$ and $a2$ are true and $b$ is false is represented only by the answer set $AS_9$: the query is true in it so this world contributes to both the lower and upper probability.
The world where only $a1$ is true is represented by $AS_2$, $q0$ is not present in it so it does not contribute to the probability.
By applying similar considerations for all the worlds, it is possible to compute the probability of the query $q0$. {$\square$}
\end{example}

\paragraph{\revisione{\textbf{Modifying the aspcs solver.}}}
We also added the conversion from HPASP to PASP to the aspcs solver~\citep{AzzRig2023-AIXIA-IC}, built on top of the aspmc solver~\citep{DBLP:conf/kr/EiterHK21,DBLP:journals/tplp/KieselTK22}, that performs inference on Second Level Algebraic Model Counting (2AMC) problems, an extension of AMC~\citep{10.1016/j.jal.2016.11.031}.
Some of them are: MAP inference~\citep{DBLP:conf/ilp/ShterionovRVKMJ14}, decision theory inference in probabilistic logic programming~\citep{DBLP:conf/aaai/BroeckTOR10}, and probabilistic inference under the smProbLog semantics~\citep{totis_de_raedt_kimmig_2023}.
More formally, let $X_{in}$ and $X_{out}$ be a partition of the variables in a propositional theory $\Pi$.
Consider two commutative semirings $\mathcal{R}_{in} = (R^{i},\oplus^i,\otimes^i,e_{\oplus}^i,e_{\otimes}^i)$ and $\mathcal{R}_{out} = (R^{o},\oplus^o,\otimes^o,e_{\oplus}^o,e_{\otimes}^o)$, connected by a transformation function $f : \mathcal{R}^{i} \to \mathcal{R}^{o}$ and two weight functions, $w_{in}$ and $w_{out}$, that associate each literal to a weight (i.e., a real number).
2AMC requires computing:
\begin{align}
  \begin{split}  
    2AMC(A) =& 
    \bigoplus\nolimits_{I_{out} \in \sigma(X_{out})}^{o} 
    \bigotimes\nolimits^{o}_{a \in I_{out}} 
    w_{out}(a) 
    \otimes^{o} \\ 
    & f(
      \bigoplus\nolimits_{I_{in} \in \delta(\Pi \mid I_{out})}^{i} \bigotimes\nolimits^{i}_{b \in I_{in}} w_{in}(b)  
      )  
  \end{split}
\end{align}
where $\sigma(X_{out})$ are the set of possible assignments to $X_{out}$ and $\delta(\Pi \mid I_{out})$ are the set of possible assignments to the variables of $\Pi$ that satisfy $I_{out}$.
We can cast inference under the credal semantics as a 2AMC problem~\citep{AzzRig2023-AIXIA-IC} by considering as inner semiring $\mathcal{R}_{in} = (\mathbb{N}^2, +, \cdot, (0,0), (1,1))$, where $+$ and $\cdot$ are component-wise and $w_{in}$ is a function associating $not\ q$ to $(0,1)$ and all the other literals to $(1,1)$, where $q$ is the query.
The first component $n_u$ of the elements $(n_u,n_l)$ of the semiring counts the models where the query is true while the second component $n_l$ counts all the models. 
The transformation function $f(n_u,n_l)$ returns a pair of values $(f_u,f_l)$ such that $f_l = 1$ if $n_l = n_u$, 0 otherwise, and $f_u = 1$ if $n_u > 0$, 0 otherwise.
The outer semiring $\mathcal{R}_{out} = ([0, 1]^2, +, \cdot, (0,0), (1,1))$ is a double probability semiring, where there is a separate probability semiring for each component.
$w_{out}$ associates $a$ to $(p,p)$ and $not\ a$ to $(1-p,1-p)$ for every probabilistic fact $p :: a$, while it associates all the remaining literals to $(1,1)$.
aspmc, and so aspcs, solves 2AMC using knowledge compilation~\citep{DBLP:journals/jair/DarwicheM02} targeting NNF circuits~\citep{DBLP:conf/ecai/Darwiche04} where the order of the variables is guided by the treewidth of the program~\citep{DBLP:conf/kr/EiterHK21}.
Differently from PASTA, aspcs does not support aggregates, but we can convert rules and constraints containing them into new rules and constraints without aggregates using standard techniques~\citep{brewka2011asp}.



\subsection{Approximate Inference}
\label{subsec:approximate_inference}
Approximate inference can be performed by using the definition of the sampling semantics and returning the results after a finite number of samples, similarly to what is done for programs under the DS~\citep{DBLP:journals/tplp/KimmigDRCR11,Rig13-FI-IJ}.
It can be performed both on the discretized program and directly on the hybrid program.

\paragraph{\revisione{\textbf{Sampling the discretized program}}.}
In the discretized program, we can speed up the computation by storing the sampled worlds to avoid calling again the answer set solver in case a world has been already sampled.
However, the number of discrete probabilistic facts obtained via the conversion is heavily dependent on the types of constraints in the program.

\paragraph{\revisione{\textbf{Sampling the hybrid program}}.}
Sampling the hybrid program has the advantage that it allows general numerical constraints, provided they involve only continuous random variables and constants.
In this way, we directly sample the continuous random variables and directly test them against the constraints that can be composed of multiple variables and complex expressions.
In fact, when constraints among random variables are considered, it is difficult to discretize the domain.
Even if the samples would be always different (since the values are floating point numbers), also here we can perform caching, as in the discretized case: the constraints, when evaluated, are still associated with a Boolean value (true or false).
So, we can store the values of the evaluations of each constraint: if a particular configuration has already been encountered, we retrieve its contribution to the probability, rather than calling the ASP solver.

\paragraph{\revisione{\textbf{Approximate algorithm description.}}}
\revisione{Algorithm~\ref{alg:sampling} illustrates the pseudocode for the sampling procedure: for the sampling on the discretized program, the algorithm discretizes the program by calling \textsc{Discretize} (Algorithm~\ref{alg:discretize}).
Then, for a number of samples $s$, it samples a world by including or not every probabilistic fact into the program (function \textsc{SampleWorld}) according to the probability values, computes its answer sets with function \textsc{ComputeAnswerSets} (recall that a world is an answer set program), checks whether the query $q$ is true in every answer set (function \textsc{QueryInEveryAnswerSet}) or in at least one answer set (function \textsc{QueryInAtLeastOneAnswerSet}), and updates the lower and upper probability bounds accordingly.
At the end of the $s$ iterations, it returns the ratio between the number of samples contributing to the lower and upper probability and the number of samples taken.
The procedure is analogous (but without discretization) in the case the sampling on the original program.
A world is sampled with function \textsc{SampleVariablesAndTestConstraints}: it takes a sample for each continuous random variable, tests that value against each constraint specified in the program, and removes it if the test succeeds, otherwise it removes the whole rule containing it.
The remaining part of the algorithm is the same.    
}
\begin{algorithm}[t]
\begin{scriptsize}
\caption{\revisione{Function \textsc{Sampling}: computation of the probability of a query $q$ by taking $s$ samples in a hybrid probabilistic answer set program $P$ with rules $R$, continuous probabilistic facts $C$, and discrete probabilistic facts $D$.
Variable $type$ indicates whether the sampling of the discretized program or the original program is selected.}}
\label{alg:sampling}
\begin{algorithmic}[1]

\Function{Sampling}{$R,C,D,q,s,type$}
  \If {$type == discretized$}
    \State $P^s \gets$ \Call{Discretize}{$R,C,D$}
  \Else
    \State $P^s \gets P$
  \EndIf
  
  \State $i \gets 0$

  \While{$i < s$}
    \If {$type == discretized$}
      \State $w \gets$ \Call{SampleWorld}{$P^s$}
    \Else
      \State $w \gets$ \Call{SampleVariablesAndTestConstraints}{$P^s$}
    \EndIf
    \State $as \gets$ \Call{ComputeAnswerSets}{$w$}
    \If {\Call{QueryInEveryAnswerSet}{$as,q$}}
      \State $lb = lb + 1$
      \State $ub = ub + 1$
    \ElsIf {\Call{QueryInAtLeastOneAnswerSet}{$as,q$}}
      \State $ub = ub + 1$
    \EndIf
    \State $i \gets i + 1$
  \EndWhile
  \State \Return $lp/s, up/s$
\EndFunction

\end{algorithmic}
\end{scriptsize}
\end{algorithm}
  
We now present two results regarding the complexity of the sampling algorithm.
The first provides a bound on the number of samples needed to obtain an estimate of
 the upper (or lower) probability of a query within a certain absolute error.
The second result provides a bound on the number of samples needed to obtain 
an estimate within a certain relative error.
\begin{theorem}[Absolute Error]
\label{thm:complexity_sampling_absolute}
\revisione{
Let $q$ be a query in a hybrid probabilistic answer set program $P$ whose exact lower (upper) probability of success is $p$.
Suppose the sampling algorithm takes $s$ samples, $k$ of which are successful, and returns 
an estimate $\hat{p} = \frac{k}{s}$ of the lower (upper) probability of success. 
Let $\epsilon$ and $\delta$ be two numbers in $[0,1]$.
Then, the probability that $\hat{p}$ is within $\epsilon$ of $p$ is at least $1-\delta$, i.e.,
$$P(p-\epsilon \leq \hat{p} \leq p+\epsilon) \geq 1 - \delta$$
if 
$$
s\geq \frac{\epsilon+\frac{1}{2}}{\epsilon^2 \delta}.
$$
}
\end{theorem}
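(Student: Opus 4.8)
The plan is to treat the sampling procedure of Algorithm~\ref{alg:sampling} as a sequence of $s$ independent and identically distributed Bernoulli trials and then apply a distribution-free concentration inequality. First I would fix which quantity is being estimated: for the lower probability a \emph{success} is the event that the query $q$ holds in every answer set of the sampled world (function \textsc{QueryInEveryAnswerSet}), and for the upper probability it is the event that $q$ holds in at least one answer set (function \textsc{QueryInAtLeastOneAnswerSet}). By the hybrid sampling semantics (Equation~\ref{eq:lower_upper_prob_sampling}) each world is drawn independently, so each trial is a Bernoulli variable $X_i$ with $P(X_i = 1) = p$, where $p$ is exactly the target lower (upper) probability. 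Hence $k = \sum_{i=1}^{s} X_i$ is binomial with parameters $s$ and $p$, the estimator $\hat p = k/s$ is unbiased, i.e. $E[\hat p] = p$, and $\mathrm{Var}(\hat p) = p(1-p)/s$.

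The second step is to apply Chebyshev's inequality to $\hat p$:
\begin{equation}
P\bigl(|\hat p - p| \ge \epsilon\bigr) \le \frac{\mathrm{Var}(\hat p)}{\epsilon^2} = \frac{p(1-p)}{s\,\epsilon^2} \le \frac{1}{4\,s\,\epsilon^2},
\end{equation}
where the last inequality uses $p(1-p) \le \tfrac14$. Requiring the right-hand side to be at most $\delta$ gives the clean sufficient condition $s \ge \frac{1}{4\epsilon^2\delta}$. Since $\tfrac14 \le \epsilon + \tfrac12$ for every $\epsilon \in [0,1]$, the stated hypothesis $s \ge \frac{\epsilon + 1/2}{\epsilon^2\delta}$ implies $s \ge \frac{1}{4\epsilon^2\delta}$; under that hypothesis $P(|\hat p - p| \ge \epsilon) \le \delta$, which is equivalent to the claim $P(p - \epsilon \le \hat p \le p + \epsilon) \ge 1 - \delta$.

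I expect the main obstacle to be not the inequality itself, which is routine, but justifying the probabilistic model rigorously: one must argue that the repeated calls to \textsc{SampleWorld} (or \textsc{SampleVariablesAndTestConstraints}) really produce i.i.d. draws whose success probability equals the semantic quantity $p$ defined as a limit in Equation~\ref{eq:lower_upper_prob_sampling}, and that the caching of previously seen worlds does not disturb independence, since it only memoizes the deterministic answer-set computation and not the random draws. A secondary point is explaining the constant: the tight Chebyshev analysis yields $\frac{1}{4\epsilon^2\delta}$, and the slightly looser factor $\epsilon + \tfrac12$ appearing in the statement is a conservative simplification of the variance term $p(1-p)$. I would also remark why the second moment (Chebyshev) is the natural tool here rather than Hoeffding: a Hoeffding bound would yield $s = O\!\left(\epsilon^{-2}\ln(1/\delta)\right)$ with a better dependence on $\delta$, but the $1/\delta$ form in the statement is precisely what the second-moment argument produces, which keeps the proof elementary and distribution-free.
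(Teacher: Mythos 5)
Your proof is correct, and it proves a slightly stronger statement than the paper does, but it follows a genuinely different route to the same family of ideas. Both arguments start from the same probabilistic model (the paper, like you, simply asserts that $k$ is binomial with parameters $s$ and $p$, citing the sampling semantics), and both are second-moment arguments; the difference is in the concentration inequality used. The paper does not use the two-sided Chebyshev bound: it invokes two \emph{one-sided} binomial tail inequalities from Feller, $P(k\geq r_1) \leq \frac{r_1(1-p)}{(r_1-sp)^2}$ for $r_1\geq sp$ and $P(k\leq r_2)\leq \frac{(s-r_2)p}{(sp-r_2)^2}$ for $r_2\leq sp$, instantiates them at $r_1=sp+s\epsilon$ and $r_2=sp-s\epsilon$, sums the two tails, and after expanding arrives at the sufficient condition $s\geq \frac{2p(1-p)+\epsilon}{\epsilon^2\delta}$, which it then relaxes via $2p(1-p)\leq \frac12$ to the stated threshold. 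This explains a point your write-up gets slightly wrong: the numerator $\epsilon+\frac12$ is not merely a conservative replacement for the variance term $p(1-p)$; the additive $\epsilon$ is genuinely produced by the paper's one-sided bounds (each tail bound carries numerator $r_1$ or $s-r_2$ rather than $sp$ or $s(1-p)$, contributing an extra $s\epsilon$ term), while the $\frac12$ comes from bounding $2p(1-p)$. Your Chebyshev route yields the tighter sufficient condition $s\geq \frac{p(1-p)}{\epsilon^2\delta}\leq\frac{1}{4\epsilon^2\delta}$, from which the theorem's hypothesis follows a fortiori, so your proof is both shorter and sharper; what the paper's derivation buys in exchange is that its algebra produces exactly the constant appearing in the theorem statement, rather than deriving a better one and then arguing the stated one is implied. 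Your additional care in justifying the i.i.d.\ Bernoulli model and the innocuousness of caching is a reasonable strengthening that the paper omits entirely.
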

\begin{proof}
\revisione{
We must prove that
\begin{equation}
P(p-\epsilon \leq \hat{p}\leq p+\epsilon)\geq 1 - \delta\label{sampling-ineq}
\end{equation}
or, equivalently, that
\begin{equation}
P(sp-s\epsilon \leq k\leq sp+s\epsilon) \geq 1 - \delta.\label{sampling-ineq-count}
\end{equation}
}
\revisione{
Since $k$ is a binomially distributed random variable with number of trials $s$ and probability of success $p$, we have that~\citep{feller1968introduction}:
\begin{equation}
P(k\geq r_1) \leq \frac{r_1(1-p)}{(r_1-sp)^2}\label{upper-interval}
\end{equation}
if $r_1\geq sp$.
Moreover
\begin{equation}
\label{lower-interval}
P(k\leq r_2) \leq \frac{(s-r_2)p}{(sp-r_2)^2}
\end{equation}
if $r_2\leq sp$.
Since $P(k \geq r_1) = 1-P(k < r_1)$, from Equation~\ref{upper-interval} we have
\begin{eqnarray}
1-P(k<r_1)&\leq& \frac{r_1(1-p)}{(r_1-sp)^2}\nonumber\\
P(k<r_1)&\geq& 1-\frac{r_1(1-p)}{(r_1-sp)^2}\label{lb-lower-interval}
\end{eqnarray}
if $r_1\geq sp$.
}

\revisione{
In our case, we have $r_1 = sp+s\epsilon$ (since $r_1 \geq sp$) and $r_2 = sp-s\epsilon$ (since $r_2 \leq sp$).
So
\begin{eqnarray*}
&& P(p-\epsilon \leq \hat{p}\leq p+\epsilon) =\\
& = & P(r_2 \leq k \leq r_1) \\
& = &P(k\leq r_1)-P(k< r_2) \\
&\geq&1-\frac{r_1(1-p)}{(r_1-sp)^2}-\frac{(s-r_2)p}{(sp-r_2)^2} \qquad \text{(Eq.~\ref{lb-lower-interval} and~\ref{lower-interval} and since $P(x \leq v) \geq P(x < v)$)} \\
&=&1-\frac{(sp+s\epsilon)(1-p)}{(s\epsilon)^2}-\frac{(s-sp+s\epsilon)p}{(s\epsilon)^2} \quad \text{(by replacing the values of $r_1$ and $r_2$)} \\
&=&\frac{s^2\epsilon^2-sp-s\epsilon+sp^2+sp\epsilon-sp+sp^2-sp\epsilon}{s^2\epsilon^2}  \quad \text{(by  expanding)}   \\
&=&\frac{s\epsilon^2-2p-\epsilon+2p^2}{s\epsilon^2} \quad \text{(by collecting $s$ and simplifying)}\\
\end{eqnarray*}
and
\begin{eqnarray}
\frac{s\epsilon^2-2p-\epsilon+2p^2}{s\epsilon^2} & \geq & 1-\delta\nonumber\\
s\epsilon^2-2p-\epsilon+2p^2 & \geq & s\epsilon^2- s\epsilon^2 \delta\nonumber\\
s\epsilon^2\delta & \geq & 2p + \epsilon - 2p^2\nonumber\\
s & \geq & \frac{2p + \epsilon - 2p^2}{\epsilon^2\delta} = \frac{2p(1-p) + \epsilon}{\epsilon^2\delta}.\label{final-ineq} 
\end{eqnarray}
Since $0\leq 2p (1-p) \leq \frac{1}{2}$ with $p \in [0,1]$, then Equation~\ref{final-ineq} is implied by
\begin{eqnarray*}
s & \geq & \frac{\epsilon+\frac{1}{2}}{\epsilon^2 \delta}
\end{eqnarray*}
}
\end{proof}

\begin{theorem}[Relative Error]
\label{thm:complexity_sampling_relative}
\revisione{
Let $q$ be a query in a hybrid probabilistic answer set program $P$ whose exact lower (upper) probability of success is $p$.
Suppose the sampling algorithm takes $s$ samples, $k$ of which are successful,
and returns an estimate $\hat{p} = \frac{k}{s}$ of the lower (upper) probability of success. 
Let $\epsilon$ and $\delta$ be two numbers in $[0,1]$.
Then, the probability that the error between $\hat{p}$  and $p$ is smaller than $p\epsilon$ is at least $1-\delta^p$, i.e.,
$$P(|p-\hat{p}| \leq p\epsilon) \geq 1 - \delta^p$$
if 
$$
s\geq \frac{3}{\epsilon^2}\ln(\frac{1}{\delta}).
$$
}
\end{theorem}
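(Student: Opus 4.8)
The plan is to bound the tail probabilities of the successful-sample count $k$ directly, after first rewriting the target event in terms of $k$. Since $\hat{p}=k/s$, the inequality $|p-\hat{p}|\le p\epsilon$ is equivalent to $|k-sp|\le sp\epsilon$, i.e.\ to $(1-\epsilon)sp\le k\le(1+\epsilon)sp$. As in Theorem~\ref{thm:complexity_sampling_absolute}, $k$ is binomial with $s$ trials and success probability $p$, so I would work with its mean $\mu=sp$. The crucial difference from the absolute-error case is that the Feller/Chebyshev-type inequalities used there give only a polynomial $O(1/(\epsilon^2\delta))$ sample count; to obtain both the logarithmic dependence $\ln(1/\delta)$ and the \emph{relative} error $p\epsilon$ demanded here, I would instead invoke the multiplicative (Chernoff) bounds for the binomial distribution.

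Concretely, the key step is to apply the two one-sided Chernoff bounds, valid for $\epsilon\in[0,1]$,
\begin{equation*}
P\bigl(k\ge(1+\epsilon)\mu\bigr)\le e^{-\mu\epsilon^2/3},\qquad P\bigl(k\le(1-\epsilon)\mu\bigr)\le e^{-\mu\epsilon^2/2},
\end{equation*}
and to bound the probability of the complementary failure event by the sum of the two right-hand sides. Since $\mu\epsilon^2/2\ge\mu\epsilon^2/3$, each term is at most $e^{-\mu\epsilon^2/3}=e^{-sp\epsilon^2/3}$, so that
\begin{equation*}
P\bigl(|p-\hat{p}|>p\epsilon\bigr)\le 2\,e^{-sp\epsilon^2/3}.
\end{equation*}

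Finally I would substitute the hypothesis on $s$. If $s\ge\frac{3}{\epsilon^2}\ln\frac{1}{\delta}$, then $\frac{sp\epsilon^2}{3}\ge p\ln\frac{1}{\delta}$, whence $e^{-sp\epsilon^2/3}\le\delta^{p}$ and the failure probability is controlled by the single exponential. The main obstacle is exactly the bookkeeping of the leading constant: the clean confidence $1-\delta^p$ in the statement requires discharging the failure against $e^{-sp\epsilon^2/3}\le\delta^p$ \emph{without} the factor $2$ that the union of the upper and lower tails produces, since $\delta^p+\delta^{3p/2}>\delta^p$ for $\delta<1$. To recover the stated form I would either use a genuinely two-sided multiplicative Chernoff estimate that avoids the doubling, or observe that the binding term is the denominator-$3$ upper tail (the sharper denominator-$2$ lower tail contributes at most $\delta^{3p/2}\le\delta^p$) and control the error by that single bound; failing that, the honest conclusion is the slightly weaker $1-2\delta^p$, equivalently the threshold $s\ge\frac{3}{\epsilon^2}\ln\frac{2}{\delta}$. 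The probabilistic content is entirely standard Chernoff; all the delicacy lies in this constant and in checking the $\epsilon\in[0,1]$ range required by the inequalities.
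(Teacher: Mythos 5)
Your proposal follows essentially the same route as the paper: regard $k$ as binomial with mean $\mu = sp$, bound the two-sided deviation $P(|k-sp| \ge sp\epsilon)$ by $2e^{-sp\epsilon^2/3}$ (the paper invokes the two-sided Chernoff bound, Corollary 4.6 of Mitzenmacher and Upfal, directly, which is exactly your union of the two one-sided tails), and then substitute the sample threshold. Where you differ is in what you are willing to claim at the end, and your caution is justified: the obstacle you name is a genuine gap in the paper's own proof. After reducing the claim to $2e^{-\epsilon^2 ps/3} \le \delta^p$, equivalently $\ln 2 - \frac{\epsilon^2 ps}{3} \le p\ln\delta$, the paper passes to the weaker inequality $-\frac{\epsilon^2 ps}{3} \le p\ln\delta$ by discarding $\ln 2$, and rewrites that as $s \ge \frac{3}{\epsilon^2}\ln(\frac{1}{\delta})$. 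That chain runs from the desired confidence to the threshold, i.e., it shows the stated $s$ is \emph{necessary} for the Chernoff certificate $2e^{-\epsilon^2 ps/3} \le \delta^p$, not that it is \emph{sufficient}, which is what the theorem asserts. What the argument legitimately proves under the stated threshold is precisely your ``honest conclusion'' $P(|p-\hat{p}| \le p\epsilon) \ge 1 - 2\delta^p$ (or $1-\delta^p-\delta^{3p/2}$ with your sharper lower tail); recovering the clean $1-\delta^p$ by this route would need $s \ge \frac{3}{\epsilon^2}\ln(\frac{1}{\delta}) + \frac{3\ln 2}{p\epsilon^2}$, a threshold depending on the unknown $p$. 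The one slip in your proposal is the side remark that the threshold $\frac{3}{\epsilon^2}\ln(\frac{2}{\delta})$ recovers the stated form: it yields only $1 - 2^{1-p}\delta^p$, which still falls short of $1-\delta^p$ when $p<1$. That does not affect your main argument, which is correct and, unlike the paper's final chain, has the implications pointing in the right direction.
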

\begin{proof}
\revisione{
According to Chernoff's bound~\citep[Corollary 4.6]{mitzenmacher2017probability}
we have that
$$
P( |ps-k| \ge ps\epsilon) \leq 2e^{-\frac{\epsilon^2ps}{3}}.
$$
So
$$
P( |p - \hat{p}| \geq p\epsilon) \le 2e^{-\frac{\epsilon^2ps}{3}}
$$
and
$$
P( |p - \hat{p}| \le p\epsilon) \geq 1-2e^{-\frac{\epsilon^2ps}{3}}.
$$
Then,
\begin{eqnarray*}
1-2e^{-\frac{\epsilon^2ps}{3}} & \geq & 1-\delta^p\\
2e^{-\frac{\epsilon^2ps}{3}} & \leq & \delta^p\\
\ln(2) -\frac{\epsilon^2ps}{3} & \leq & p \ln(\delta) \\
-\frac{\epsilon^2ps}{3} & \leq & p \ln(\delta) - \ln(2) \leq p \ln(\delta) \\
s & \geq & \frac{3}{\epsilon^2}\ln(\frac{1}{\delta}).
\end{eqnarray*}
}
\end{proof}
\revisione{
Please note that in Theorem \ref{thm:complexity_sampling_relative} we exponentiate $\delta$ to $p$.
This is needed to avoid the appearance of $p$ in  the bound, since $p$ is unknown.
When $p$ is 0, we have no guarantees on the error.
When $p$ is 1, the confidence is $1-\delta$.
For $p$ growing from 0 to 1, the bound provides increasing confidence.
In fact, approximating small probabilities is difficult due to the low success probability of the sample.
}

\section{Experiments}
\label{sec:experiments}
We ran experiments on a computer with 8 GB of RAM and a time limit of 8 hours (28800 seconds).
We generated five synthetic datasets, $t1$, $t2$, $t3$, $t4$, and $t5$, where every world of all the discretized versions of the programs has at least one answer set.
We use the SciPy library~\citep{2020scipy} to sample continuous random variables.
The following code snippets show the PASTA-backed programs; the only difference with the ones for aspcs is in the negation symbol, $not$ for the former and $\backslash+$ for the latter.
For all the datasets, we compare the exact algorithms and the approximate algorithms based on sampling, with an increasing number of samples.
We record the time required to parse the program and to convert the HPASP into a PASP together with the inference time.
The time for the first two tasks is negligible with respect to the inference time.

\paragraph{\revisione{\textbf{Dataset t1}}.}
In $t1$ we consider instances of increasing size of the program shown in Example~\ref{ex:example_interval}.
Every instance of size $n$ has 
$n/2$ discrete probabilistic facts $d_i$, 
$n/2$ continuous random variables $c_i$ with a Gaussian distribution with mean 0 and variance 1, 
$n/2$ pair of rules $q0 \impl below(c_i, 0.5), not \ q1$ and $q1 \impl below(c_i, 0.5), not \ q0$, $i = \{1,\dots,n/2\}$, and 
$n/2$ rules $q0 \impl below(c_i, 0.7), d_i$, one for each $i$ for $i=\{1,\dots,n/2\}$.
The query is $q0$.
The goal of this experiment is to analyze the behaviour of the algorithm by increasing the number of discrete and continuous random variables.
The instance of size 2 is:
\begin{lstlisting}
0.5::d1.
c1:gaussian(0,1).
q0 :- below(c1,0.5), not q1.
q1 :- below(c1,0.5), not q0.
q0 :- below(c1,0.7), d1.
\end{lstlisting}

\paragraph{\revisione{\textbf{Dataset t2}}.}
In $t2$ we consider a variation of $t1$, where we fix the number $k$ of discrete probabilistic facts to 2, 5, 8, and 10, and increase the number of continuous random variables starting from 1. 
Every instance of size $n$ contains 
$k$ discrete probabilistic facts $d_i$, $i \in \{0,\dots,k-1\}$,
$n$ continuous random variables $c_i$, $i \in \{1,\dots,n\}$ with a Gaussian distribution with mean 0 and variance 1, 
$n$ pair of rules $q0 \impl below(c_i, 0.5), not \ q1$ and $q1 \impl below(c_i, 0.5), not \ q0$, $i = \{1,\dots,n\}$, and 
$n$ rules $q0 \impl below(c_i, 0.7), d_{(i-1) \% k}$, one for each $i$ for $i=\{1,\dots,n\}$.
The query is $q0$.
Here, when the instance size $n$ is less than the number of discrete probabilistic facts $k$, $k - n$ probabilistic facts are not relevant for the computation of the probability of the query. 
The instance of size 2 with $k = 2$ is:
\begin{lstlisting}
0.5::d0.
0.5::d1.
c1:gaussian(0,1).
c2:gaussian(0,1).
q0 :- below(c1,0.5), not q1.
q1 :- below(c1,0.5), not q0.
q0 :- below(c2,0.5), not q1.
q1 :- below(c2,0.5), not q0.
q0 :- below(c1,0.7), d0.
q0 :- below(c2,0.7), d1.
\end{lstlisting}

\paragraph{\revisione{\textbf{Dataset t3}}.}
In $t3$, we consider again a variation of $t1$ where we fix the number $k$ of continuous random variables to 2, 5, 8, and 10, and increase the number of discrete probabilistic facts starting from 1. 
Every instance of size $n$ contains 
$k$ continuous probabilistic facts $c_i$, $i \in \{0,\dots,k-1\}$, with a Gaussian distribution with mean 0 and variance 1, 
$n$ discrete probabilistic facts $d_i$, $i \in \{1,\dots,n\}$,
$k$ pair of rules $q0 \impl below(c_i, 0.5), not \ q1$ and $q1 \impl below(c_i, 0.5), not \ q0$, $i = \{1,\dots,n\}$, and 
$n$ rules $q0 \impl below(c_{(i-1) \% k}, 0.7), d_i$, one for each $i$ for $i=\{1,\dots,n\}$.
The query is $q0$.
The instance of size 3 with $k = 2$ is:
\begin{lstlisting}
0.5::d1.
0.5::d2.
0.5::d3.
c0:gaussian(0,1).
c1:gaussian(0,1).
q0 :- below(c0,0.5), not q1.
q1 :- below(c0,0.5), not q0.
q0 :- below(c1,0.5), not q1.
q1 :- below(c1,0.5), not q0.
q0 :- below(c0,0.7), d1.
q0 :- below(c1,0.7), d2.
q0 :- below(c0,0.7), d3.
\end{lstlisting} 

\paragraph{\revisione{\textbf{Dataset t4}}.}
In $t4$ we consider programs with one discrete probabilistic fact $d$ and one continuous random variable $a$ that follows a Gaussian distribution with mean 0 and standard deviation 10.
An instance of size $n$ has 
$n$ pairs of rules $q0 \impl between(a,lb_i,ub_i), not \ q1$ and $q1 \impl between(a,lb_i,ub_i), not \ q0$ where $lb_i$ and $ub_i$ are randomly generated (with, $lb_i < ub_i$), for $j = 1,\dots,n$, and 
$n$ rules of the form $q0 \impl d, between(a,LB_j,UB_j)$, where the generation of the $LB_j$ and $UB_j$ follows the same process of the previous rule.
We set the minimum value of $lb_i$ and $LB_j$ to -30 and, for both rules, the lower and upper bounds for the $\mathit{between}/2$ comparison predicate are uniformly sampled in the range $[u_{i-1}, u_{i-1} + 60/n]$, where $u_{i-1}$ is the upper bound of the previous rule.
The query is $q0$.
Here, the goal is to test the algorithm with an increasing number of intervals to consider.
An example of instance of size 2 is:
\begin{lstlisting}
0.4::d.
c:gaussian(0,10).
q0:- between(c,-30,-23.606), not q1.
q1:- between(c,-30,-23.606), not q0.
q0:- d, between(c,-30,-29.75).
\end{lstlisting}

\paragraph{\revisione{\textbf{Dataset t5}}.}
Lastly, in $t5$ we consider programs of the form shown in Example~\ref{ex:blood_pressure}: the instance of index $n$ has $n$ people involved, $n$ discrete probabilistic facts, and $n$ $d/1$ and $s/1$ continuous random variables.
The remaining part of the program is the same.
Example~\ref{ex:blood_pressure} shows the instance of index 4 (4 people involved).
The query is $high\_number\_strokes$.

\begin{figure}[tb]
\centering
\begin{subfigure}{0.48\textwidth}
\centering
\resizebox{\resizeGraphFactor\textwidth}{!}{%
\begin{tikzpicture}
\begin{axis}[
    xlabel={Instance},
    ylabel={Execution Time (s)},
    legend pos=north east,
    grid style=dashed,
    legend cell align={left},every axis plot/.append style={ultra thick}
    ]

\addplot[color = first_plot, dotted]
    coordinates {
      (2,4.11)
      (4,3.985)
      (6,4.0)
      (8,4.012)
      (10,4.117)
      (12,4.588)
      (14,5.722)
      (16,10.078)
      (18,43.779)
      (20,61.043)
      (22,111.646)
      (24,267.987)
      (26,490.21)
    };
\addlegendentry{t1 aspcs}

\addplot[color = second_plot, loosely dotted] 
    coordinates {
      (2, 0.762)
      (4, 0.557)
      (6, 0.566)
      (8, 0.934)
      (10, 4.56)
      (12, 36.377)
      (14, 315.14)
      (16, 3211.299)
    };
\addlegendentry{t1 PASTA}

\addplot[color = third_plot, densely dotted] 
    coordinates {
      (21,66.93)
      (22,61.512)
      (23,61.474)
      (24,64.023)
      (25,259.358)
      (26,89.681)
      (27,114.974)
      (28,98.145)
      (29,104.144)
      (30,161.094)
      (31,105.274)
      (32,134.64)
      (33,105.199)
      (34,155.023)
      (35,2698.13)
    };
\addlegendentry{t4 aspcs}

\addplot[color = fourth_plot, dashed] 
    coordinates {
      (2,1.253)
      (3,1.175)
      (4,1.225)
      (5,1.655)
      (6,3.462)
      (7,11.808)
      (8,46.55)
      (9,196.455)
      (10,843.472)
      (11,3683.347)
    };
\addlegendentry{t4 PASTA}

\addplot[color = fifth_plot, loosely dashed] 
    coordinates {
      (1,4)
      (2,4.061)
      (3,4.01)
      (4,4)
      (5,4)
      (6,4)
      (7,4)
      (8,4)
      (9,16)
    };
\addlegendentry{t5 aspcs}

\addplot[color = sixth_plot, dashed, densely dashed] 
    coordinates {
      (1, 1)
      (2, 1)
      (3, 61)
      (4, 5392)
    };
\addlegendentry{t5 PASTA}

\end{axis}
\end{tikzpicture}
}
\caption{$t1$, $t4$, and $t5$.}
\label{fig:t1_t4_t5_results}
\end{subfigure}%
\hfill
\begin{subfigure}{0.48\textwidth}
\resizebox{\resizeGraphFactor\textwidth}{!}{%
\begin{tikzpicture}
\begin{axis}[
    xlabel={Number of Discrete Facts},
    ylabel={Execution Time (s)},
    legend pos=north east,
    grid style=dashed,
    legend cell align={left},every axis plot/.append style={ultra thick}
    ]

    \addplot[color = first_plot] 
    coordinates {
      (20,12.241)
      (25,4.785)
      (30,4.806)
      (35,5.052)
      (40,6.208)
      (45,4.995)
      (50,5.404)
      (55,6.775)
      (60,5.401)
      (65,12.702)
      (70,5.414)
      (75,6.706)
      (80,6.007)
      (85,11.529)
      (90,53.435)
      (95,6.022)
      (100,26.732)
      (110,347.963)
    };
\addlegendentry{2}

\addplot[color = second_plot] 
    coordinates {
      (20,10.005)
      (25,5.389)
      (30,7.896)
      (35,7.711)
      (40,7.157)
      (45,16.518)
      (50,6.934)
      (55,11.332)
      (60,38.808)
      (65,17.628)
      (70,8.343)
      (75,39.906)
    };

\addlegendentry{5}

\end{axis}
\end{tikzpicture}
}
\caption{2 and 5 continuous facts.}
\label{fig:t3_2_5_big}
\end{subfigure}
\caption{Results for $t1$, $t4$, and $t5$ (left) and results for aspcs applied to $t3$ (right) with 2 and 5 continuous facts.}
\label{fig:results_t1_t4_t5_t3_big}
\end{figure}
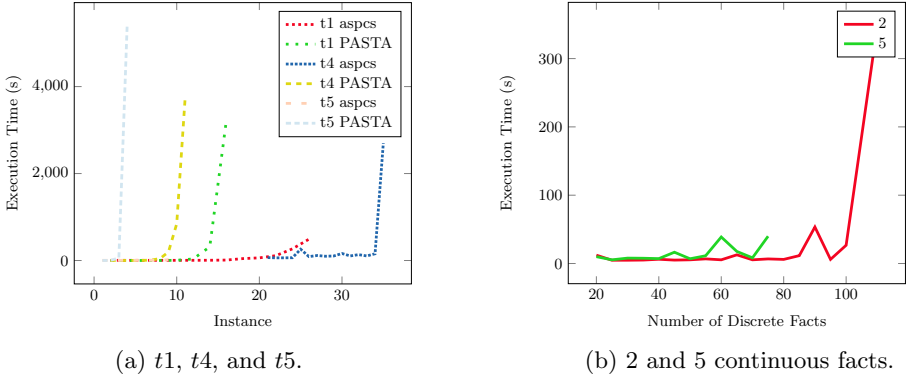

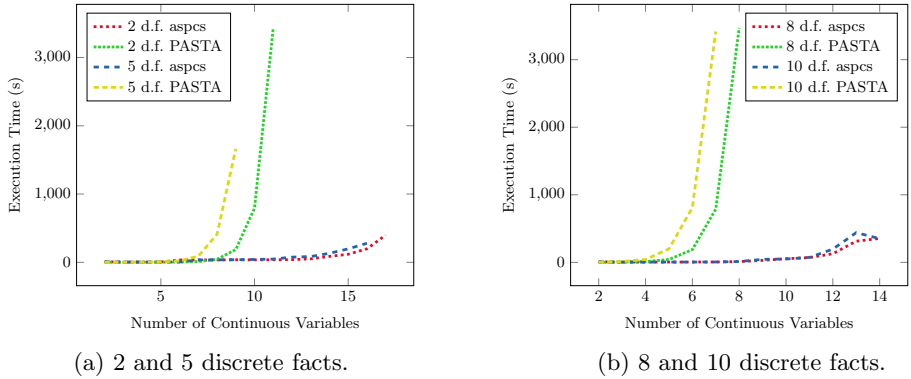
\begin{figure}[tb]
\centering
\begin{subfigure}{0.48\textwidth}
\centering
\resizebox{\resizeGraphFactor\textwidth}{!}{%
\begin{tikzpicture}
\begin{axis}[
    xlabel={Number of Continuous Variables},
    ylabel={Execution Time (s)},
    legend pos=north west,
    grid style=dashed,
    legend cell align={left},every axis plot/.append style={ultra thick}
    ]


\addplot[color = first_plot, dotted] 
    coordinates {
      (2,7.323)
      (3,4.153)
      (4,4.188)
      (5,5.169)
      (6,34.229)
      (7,34.437)
      (8,34.598)
      (9,37.614)
      (10,36.012)
      (11,36.981)
      (12,39.548)
      (13,50.463)
      (14,86.703)
      (15,117.987)
      (16,197.576)
      (17,397.857)
    };
\addlegendentry{2 d.f. aspcs}

\addplot[color = second_plot, densely dotted] 
    coordinates {
      (2,2.265)
      (3,1.159)
      (4,1.186)
      (5,2.822)
      (6,3.179)
      (7,10.688)
      (8,44.187)
      (9,186.835)
      (10,792.863)
      (11,3434.921)
    };
\addlegendentry{2 d.f. PASTA}

\addplot[color = third_plot, dashed] 
    coordinates {
      (2,7.09)
      (3,4.157)
      (4,4.156)
      (5,4.329)
      (6,4.65)
      (7,34.995)
      (8,36.214)
      (9,37.696)
      (10,41.742)
      (11,47.419)
      (12,75.924)
      (13,84.244)
      (14,132.102)
      (15,197.18)
      (16,278.845)
    };
\addlegendentry{5 d.f. aspcs}

\addplot[color = fourth_plot, densely dashed] 
    coordinates {
      (2,1.469)
      (3,1.369)
      (4,2.109)
      (5,5.442)
      (6,20.558)
      (7,90.061)
      (8,411.522)
      (9,1663.285)
    };
\addlegendentry{5 d.f. PASTA}

\end{axis}
\end{tikzpicture}
}
\caption{2 and 5 discrete facts.}
\label{subfig:t2_2_5_results}
\end{subfigure}%
\hfill
\begin{subfigure}{0.48\textwidth}
\resizebox{\resizeGraphFactor\textwidth}{!}{%
\begin{tikzpicture}
\begin{axis}[
    xlabel={Number of Continuous Variables},
    ylabel={Execution Time (s)},
    legend pos=north east,
    grid style=dashed,
    legend cell align={left},every axis plot/.append style={ultra thick}
    ]

\addplot[color = first_plot, dotted] 
coordinates {
  (2,7.473)
  (3,4.159)
  (4,4.15)
  (5,4.324)
  (6,4.72)
  (7,5.797)
  (8,11.239)
  (9,29.568)
  (10,53.995)
  (11,71.074)
  (12,127.587)
  (13,313.236)
  (14,354.532)
};
\addlegendentry{8 d.f. aspcs}

\addplot[color = second_plot, densely dotted] 
coordinates {
  (2,1.88)
  (3,3.431)
  (4,10.372)
  (5,42.699)
  (6,188.228)
  (7,788.135)
  (8,3471.223)
};
\addlegendentry{8 d.f. PASTA}

\addplot[color = third_plot, dashed] 
coordinates {
  (2,6.929)
  (3,4.16)
  (4,4.154)
  (5,4.338)
  (6,4.742)
  (7,5.684)
  (8,11.896)
  (9,44.803)
  (10,49.825)
  (11,73.189)
  (12,195.55)
  (13,440.046)
  (14,350.53)
};
\addlegendentry{10 d.f. aspcs}

\addplot[color = fourth_plot, densely dashed] 
coordinates {
  (2,3.988)
  (3,11.179)
  (4,43.378)
  (5,199.38400000000001)
  (6,808.621)
  (7,3416.981)
};
\addlegendentry{10 d.f. PASTA}

\end{axis}
\end{tikzpicture}
}
\caption{8 and 10 discrete facts.}
\label{subfig:t2_8_10}
\end{subfigure}
\caption{Results for the experiment $t2$ with a fixed number of discrete facts and an increasing number of continuous variables.}
\label{fig:t2}
\end{figure}

\begin{figure}[tb]
\centering
\begin{subfigure}{0.48\textwidth}
\centering
\resizebox{\resizeGraphFactor\textwidth}{!}{%
\begin{tikzpicture}
\begin{axis}[
    xlabel={Number of Discrete Facts},
    ylabel={Execution Time (s)},
    legend pos=north west,
    grid style=dashed,
    legend cell align={left},every axis plot/.append style={ultra thick}
    ]

\addplot[color = first_plot, dotted] 
    coordinates {
      (2,4.157)
      (3,4.127)
      (4,4.115)
      (5,4.07)
      (6,4.092)
      (7,4.122)
      (8,4.181)
      (9,4.139)
      (10,4.116)
      (11,4.132)
      (12,4.19)
      (13,4.128)
      (14,4.159)
      (15,4.154)
      (16,4.192)
      (17,4.207)
      (18,4.243)
      (19,4.242)
      (20,4.337)
    };
\addlegendentry{2 c.f. aspcs}

\addplot[color = second_plot, densely dotted] 
    coordinates {
      (2,1.39)
      (3,1.208)
      (4,1.2)
      (5,1.226)
      (6,1.278)
      (7,1.39)
      (8,1.684)
      (9,2.212)
      (10,3.189)
      (11,5.619)
      (12,10.431)
      (13,20.343)
      (14,42.927)
      (15,85.606)
      (16,176.457)
      (17,370.208)
      (18,746.456)
      (19,1538.047)
      (20,3282.658)
      };
\addlegendentry{2 c.f. PASTA}

\addplot[color = third_plot, dashed] 
    coordinates {
      (2,4.217)
      (3,4.173)
      (4,4.352)
      (5,4.245)
      (6,4.37)
      (7,4.335)
      (8,4.392)
      (9,4.35)
      (10,4.449)
      (11,4.453)
      (12,4.466)
      (13,4.597)
      (14,4.651)
      (15,4.516)
      (16,4.753)
      (17,4.609)
      (18,4.618)
      (19,4.525)
      (20,4.538)
    };
\addlegendentry{5 c.f. aspcs}

\addplot[color = fourth_plot, densely dashed] 
    coordinates {
      (2,1.324)
      (3,1.472)
      (4,2.305)
      (5,5.77)
      (6,10.948)
      (7,21.229)
      (8,43.38)
      (9,88.054)
      (10,183.878)
      (11,388.751)
      (12,775.672)
      (13,1618.839)
      (14,3413.759)
    };
\addlegendentry{5 c.f. PASTA}

\end{axis}
\end{tikzpicture}
}
\caption{2 and 5 continuous facts.}
\label{subfig:t3_2_5_results}
\end{subfigure}%
\hfill
\begin{subfigure}{0.48\textwidth}
\resizebox{\resizeGraphFactor\textwidth}{!}{%
\begin{tikzpicture}
\begin{axis}[
    xlabel={Number of Discrete Facts},
    ylabel={Execution Time (s)},
    legend pos=north east,
    grid style=dashed,
    legend cell align={left},every axis plot/.append style={ultra thick}
    ]

\addplot[color = first_plot, dotted] 
coordinates {
  (2,5.003)
  (3,5.07)
  (4,5.428)
  (5,6.101)
  (6,7.121)
  (7,8.996)
  (8,11.611)
  (9,25.872)
  (10,25.135)
  (11,26.365)
  (12,19.387)
  (13,15.832)
  (14,34.154)
  (15,23.141)
  (16,34.916)
  (17,31.168)
  (18,34.034)
  (19,27.113)
  (20,18.496)
};
\addlegendentry{8 c.f. aspcs}

\addplot[color = second_plot, densely dotted] 
coordinates {
  (2,1.958)
  (3,3.738)
  (4,11.611)
  (5,45.283)
  (6,191.951)
  (7,799.413)
  (8,3350.462)
};
\addlegendentry{8 c.f. PASTA}

\addplot[color = third_plot, dashed] 
coordinates {
  (2,9.72)
  (3,13.308)
  (4,18.212)
  (5,22.63)
  (6,31.954)
  (7,41.59)
  (8,45.034)
  (9,48.312)
  (10,62.498)
  (11,66.366)
  (12,76.385)
  (13,86.417)
  (14,94.203)
  (15,108.922)
  (16,129.706)
  (17,151.718)
  (18,116.179)
  (19,139.287)
  (20,314.925)
};
\addlegendentry{10 c.f. aspcs}

\addplot[color = fourth_plot, densely dashed] 
coordinates {
  (2,10.346)
  (3,10.663)
  (4,42.651)
  (5,168.843)
  (6,708.601)
  (7,2981.6)
};
\addlegendentry{10 c.f. PASTA}

\end{axis}
\end{tikzpicture}
}
\caption{8 and 10 continuous facts.}
\label{subfig:t3_8_10}
\end{subfigure}
\caption{Results for the experiment $t3$ with a fixed number of continuous random variables and an increasing number of discrete facts. The x axis of the left plot is cut at size 20, to keep the results readable.}
\label{fig:t3}
\end{figure}
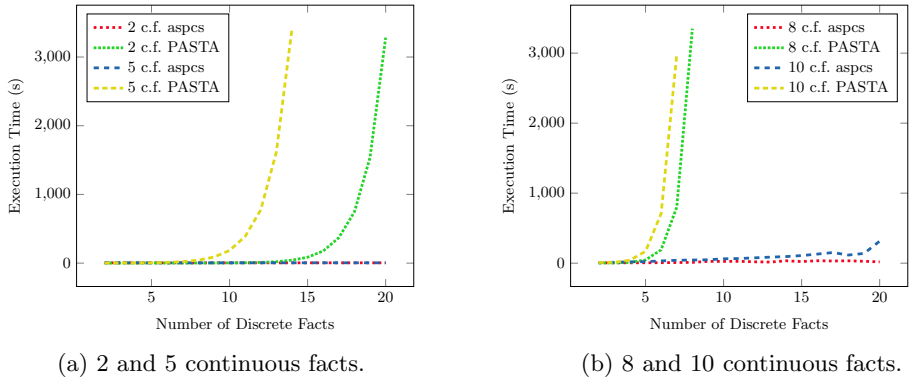

\begin{table}[tb] 
\caption{Results for the approximate algorithms based on sampling.
The columns contain, from the left, 
the name of the dataset,
the instance index,
and the average time required to take $10^2$, $10^3$, $10^4$, $10^5$, and $10^6$ samples on the original and converted program.
O.O.M. and T.O. stand respectively for out of memory and timeout.
}
\label{tab:results_sampling_means}
\centering
{\footnotesize
\begin{tabular}{| c | c | c | c | c | c | c |} 
data. & inst. & $10^2$ s. O./C. & $10^3$ s. O./C. & $10^4$ s. O./C. & $10^5$ s. O./C. & $10^6$ s. O./C. \\
\hline
\hline
t1 & 50 & 2.81/2.21 & 12.46/2.07 & 116.07/10.19 & 1200.19/90.83 & 11998.92/906.13 \\
t1 & 60 & 3.05/2.13 & 14.50/2.10 & 139.83/11.19 & 1417.36/103.63 & 14154.23/1059.09 \\
t1 & 70 & 3.80/2.16 & 17.05/2.25 & 167.39/12.69 & 1645.97/117.31 & 16523.49/1222.05 \\
t1 & 80 & 4.00/1.74 & 19.26/2.42 & 190.23/14.29 & 1859.69/135.09 & 18524.07/1375.32 \\
t1 & 90 & 4.26/1.89 & 21.44/2.63 & 215.25/16.03 & 2060.15/152.52 & 20518.89/1558.51 \\
t1 & 100 & 4.51/1.96 & 23.69/2.90 & 231.85/17.95 & 2254.84/170.81 & 22692.96/1751.96 \\
\hline
t2 & 50 & 3.77/1.35 & 22.80/2.82 & 223.95/17.21 & 2299.33/163.13 & 22277.69/1654.94 \\
t2 & 60 & 4.22/1.33 & 27.02/3.09 & 269.97/20.14 & 2752.39/194.99 & 26532.31/1986.42 \\
t2 & 70 & 4.63/1.42 & 31.49/3.43 & 312.66/23.54 & 3134.76/226.24 & T.O./2271.90 \\
t2 & 80 & 4.65/1.6 & 36.00/3.88 & 363.20/27.02 & 3616.07/261.18 & T.O./2646.70 \\
t2 & 90 & 5.42/1.66 & 42.78/4.24 & 418.04/30.26 & 4006.67/293.14 & T.O./2926.46 \\
t2 & 100 & 5.77/2.18 & 46.35/4.63 & 457.80/34.05 & 4520.69/327.29 & T.O./3286.04 \\
\hline
t3 & 50 & 3.90/1.82 & 25.46/3.18 & 245.32/19.08 & 2439.00/176.84 & 21818.69/1808.43 \\
t3 & 60 & 4.71/1.89 & 29.50/3.59 & 288.98/22.79 & 2817.13/215.89 & 26084.07/2176.02 \\
t3 & 70 & 5.17/1.86 & 34.16/3.83 & 335.37/25.34 & 3247.42/241.67 & T.O./2452.97 \\
t3 & 80 & 5.51/2.03 & 38.71/4.36 & 382.96/29.40 & 3685.38/281.14 & T.O./2823.65 \\
t3 & 90 & 5.67/2.09 & 44.20/4.71 & 432.43/32.85 & 4123.31/318.48 & T.O./3154.16 \\
t3 & 100 & 6.15/2.15 & 48.83/5.17 & 476.51/36.47 & 4535.84/357.23 & T.O./3542.78 \\
\hline
t4 & 50 & 2.37/22.55 & 2.10/23.99 & 7.27/34.63 & 59.20/95.59 & 571.27/428.92 \\
t4 & 60 & 2.42/43.08 & 2.29/44.18 & 7.81/61.99 & 62.36/178.80 & 602.12/842.64 \\
t4 & 70 & 2.41/O.O.M. & 2.35/O.O.M. & 8.28/O.O.M. & 66.26/O.O.M. & 636.83/O.O.M. \\
t4 & 80 & 2.47/O.O.M. & 2.61/O.O.M. & 8.83/O.O.M. & 69.69/O.O.M. & 668.62/O.O.M. \\
t4 & 90 & 2.03/O.O.M. & 2.89/O.O.M. & 9.79/O.O.M. & 75.85/O.O.M. & 717.39/O.O.M. \\
t4 & 100 & 2.52/O.O.M. & 3.09/O.O.M. & 10.11/O.O.M. & 77.70/O.O.M. & 736.59/O.O.M. \\
\hline
t5 & 50 & 6.14/2.29 & 44.20/6.72 & 440.02/69.44 & 4433.25/1065.66 & T.O./24376.32 \\
t5 & 60 & 7.03/3.16 & 53.01/8.30 & 533.38/87.33 & 5437.58/1575.33 & T.O./23889.43 \\
t5 & 70 & 8.16/3.14 & 62.37/10.44 & 637.10/125.86 & 6497.07/1874.05 & T.O./T.O. \\
t5 & 80 & 8.82/3.26 & 70.79/12.62 & 728.98/163.81 & 7610.59/2855.01 & T.O./T.O. \\
t5 & 90 & 9.41/3.34 & 80.89/14.43 & 845.18/209.66 & 8931.12/3879.62 & T.O./T.O. \\
t5 & 100 & 10.26/4.01 & 89.73/16.35 & 936.90/248.00 & 9984.17/4310.61 & T.O./T.O. \\
\hline
\hline\end{tabular}}
\end{table}

\begin{table}[tb] 
\caption{Standard deviations for the results listed in Table~\ref{tab:results_sampling_means}.
A dash denotes that there are no results for that particular instance due to either memory error or time limit.
}
\label{tab:results_sampling_stddevs}
\centering
{\footnotesize
\begin{tabular}{| c | c | c | c | c | c | c |} 
data. & inst. & $10^2$ s. O./C. & $10^3$ s. O./C. & $10^4$ s. O./C. & $10^5$ s. O./C. & $10^6$ s. O./C. \\
\hline
\hline
t1 & 50 & 0.97/1.34 & 0.70/0.15 & 6.71/0.71 & 71.46/5.84 & 920.82/54.93 \\
t1 & 60 & 0.99/1.19 & 0.76/0.11 & 9.10/0.65 & 95.47/5.71 & 1003.31/50.87 \\
t1 & 70 & 1.29/1.17 & 0.67/0.12 & 10.55/0.73 & 86.36/7.61 & 1141.45/51.24 \\
t1 & 80 & 1.24/0.92 & 1.01/0.10 & 9.52/0.65 & 71.22/7.98 & 1381.12/64.04 \\
t1 & 90 & 1.24/1.01 & 1.29/0.11 & 10.65/0.71 & 56.47/7.00 & 1525.12/120.35 \\
t1 & 100 & 1.20/0.96 & 1.37/0.24 & 12.62/0.92 & 59.29/10.79 & 1754.89/99.47 \\
\hline
t2 & 50 & 0.92/0.23 & 0.68/0.22 & 11.56/0.51 & 150.47/3.92 & 438.66/39.72 \\
t2 & 60 & 0.86/0.03 & 1.03/0.06 & 15.06/0.40 & 162.91/3.81 & 581.83/53.66 \\
t2 & 70 & 0.93/0.11 & 0.86/0.10 & 20.00/0.44 & 158.68/7.14 & -/68.62 \\
t2 & 80 & 0.17/0.19 & 0.82/0.20 & 22.24/1.22 & 202.95/9.73 & -/50.98 \\
t2 & 90 & 0.47/0.19 & 3.66/0.30 & 30.78/1.43 & 218.51/12.05 & -/53.71 \\
t2 & 100 & 0.37/0.91 & 3.03/0.34 & 22.13/2.67 & 159.32/13.80 & -/60.19 \\
\hline
t3 & 50 & 0.07/0.11 & 0.85/0.11 & 5.92/0.59 & 75.95/7.63 & 324.79/56.21 \\
t3 & 60 & 0.57/0.14 & 1.81/0.14 & 12.05/1.58 & 140.43/10.00 & 262.68/63.53 \\
t3 & 70 & 0.54/0.20 & 2.09/0.24 & 15.22/1.75 & 162.67/12.74 & -/88.86 \\
t3 & 80 & 0.56/0.10 & 2.31/0.36 & 15.50/2.30 & 183.78/9.30 & -/67.44 \\
t3 & 90 & 0.20/0.13 & 1.43/0.34 & 22.31/2.25 & 173.10/15.10 & -/100.41 \\
t3 & 100 & 0.26/0.12 & 1.89/0.39 & 22.30/2.61 & 158.54/8.47 & -/93.45 \\
\hline
t4 & 50 & 1.22/0.31 & 0.15/1.52 & 0.27/3.66 & 2.78/6.86 & 6.35/13.43 \\
t4 & 60 & 1.19/1.22 & 0.17/3.23 & 0.35/4.77 & 3.08/16.63 & 12.71/27.04 \\
t4 & 70 & 1.21/- & 0.15/- & 0.35/- & 4.22/- & 18.99/- \\
t4 & 80 & 1.19/- & 0.17/- & 0.43/- & 4.27/- & 25.56/- \\
t4 & 90 & 0.93/- & 0.15/- & 0.44/- & 3.42/- & 27.86/- \\
t4 & 100 & 1.10/- & 0.18/- & 0.66/- & 4.61/- & 23.38/- \\
\hline
t5 & 50 & 0/0.92 & 1.92/0.51 & 17.83/8.27 & 122.23/208.32 & -/3971.11 \\
t5 & 60 & 0.93/1.56 & 2.01/0.66 & 22.59/10.41 & 177.61/238.23 & -/7447.05 \\
t5 & 70 & 1.10/1.46 & 2.46/0.93 & 20.09/14.83 & 158.03/400.35 & -/- \\
t5 & 80 & 0.92/0.88 & 2.97/1.27 & 34.17/26.08 & 303.35/709.13 & -/- \\
t5 & 90 & 0.82/0.94 & 2.15/1.37 & 37.42/28.92 & 442.68/684.92 & -/- \\
t5 & 100 & 0.92/1.36 & 3.81/1.17 & 47.29/26.12 & 793.68/745.80 & -/- \\
\hline
\hline\end{tabular}}
\end{table}

\paragraph{\revisione{\textbf{Exact inference results}}.}
\revisione{The goal of benchmarking the exact algorithms is threefold: 
i) identifying how the number of continuous and discrete probabilistic facts influences the execution time, 
ii) assessing the impact on the execution time of an increasing number of intervals, and 
iii) comparing knowledge compilation with projected answer set enumeration.}
Figure~\ref{fig:t1_t4_t5_results} shows the results of exact inference for the algorithms backed by PASTA (dashed lines) and aspcs (straight lines) on $t1$, $t4$, and $t5$.
For $t5$ PASTA is able to solve up to instance 4 while aspcs can solve up to instance 9 (in a  few seconds).
Similarly for $t1$, where aspcs doubles the maximum sizes solvable by PASTA.
The difference is even more marked for $t4$: here, PASTA solves up to instance 11 while aspcs up to instance 35. 
Figures~\ref{fig:t2} and~~\ref{fig:t3} show the results for $t2$ and $t3$.
Here, PASTA cannot solve instances with around more than 20 probabilistic facts and continuous random variables combined.
The performance of aspcs are clearly superior, in particular for $t3$ with 2 and 5 continuous random variables (Figure~\ref{subfig:t3_2_5_results}).
Note that this is the only plot where we cut the x axis at instance 20, to keep the results readable.
To better investigate the behaviour of aspcs in this case, we keep increasing the number of discrete facts until we get a memory error, while the number of continuous variables is fixed to 2 and 5. 
Figure~\ref{fig:t3_2_5_big} shows the results: with 2 continuous variables, we can solve up to instance 110 while with 5 continuous variables up to 75.
In general, the results of exact inference are in accordance with the theoretical complexity results.
However, as also empirically shown by~\cite{AzzRig2023-AIXIA-IC}, knowledge compilation has a huge impact on the execution times.
Overall, as expected, PASTA is slower in all the tests, being based on (projected) answer set enumeration.
For all, both PASTA (exact algorithm) and aspcs stop for lack of memory.

\paragraph{\revisione{\textbf{Approximate inference results}}.}
\revisione{The goal of the experiments run with approximate algorithms is:
i) analyzing the impact on the execution time of the number of samples taken,
ii) comparing the approach based on sampling the original program against the one based on sampling the converted program in terms of execution times, and 
iii) assessing the memory requirements.}
Table~\ref{tab:results_sampling_means} shows the averages on 5 runs of the execution time of the approximate algorithm applied to both the discretized and original program with $10^2$, $10^3$, $10^4$, $10^5$, and $10^6$ samples, for each of the five datasets.
Standard deviations are listed in Table~\ref{tab:results_sampling_stddevs}.
For $t2$ and $t3$ we consider programs with the same number of continuous variables and discrete probabilistic facts.
In four of the five tests (all except $t4$), sampling the original program is slower than sampling the converted program, sometimes by a significant amount.
This is probably due to the fact that sampling a continuous distribution is slower than sampling a Boolean random variable.
For example, in instance 100 of $t3$ sampling the original program is over twelve times slower than sampling the converted program.
However, the discretization process increases the number of probabilistic facts and so the required memory: for $t4$, from the instance 70, taking even 100 samples requires more than 8 GB of RAM.
To better assess the memory requirements, we repeat the experiments with the approximate algorithms with 6, 4, 2, and 1 GB of maximum available memory.
For all, taking up to $10^5$ samples in the original program is feasible also with only 1 GB of memory.
The same considerations hold for sampling the converted program, except for $t4$.
Table~\ref{tab:reducing_memory} shows the largest solvable instances together with the number of probabilistic facts, rules (for the converted program), and samples for each instance: for example, with 1 GB of memory it is possible to take only up to $10^4$ samples in the instance of size 40.
For this dataset, the increasing number of $between/3$ predicates requires an increasing number of rules and probabilistic facts to be included in the program during the conversion, to properly handle all the intervals: the instance of size 70 has 142 probabilistic facts and more than 30000 rules, that make the generation of the answer sets very expensive and sampling it with only 8 GB of memory is not feasible.

\begin{table}[tb]
\caption{Largest solvable instances of $t4$ by sampling the converted program when reducing the available memory.
The columns contain, from the left, 
the maximum amount of memory,
the largest solvable instance together with the number of probabilistic facts (\# p.f.) and rules (\# rules) obtained via the conversion,
and the maximum number of samples that can be taken (max. \# samples).
Note that we increase the number of samples by starting from $10^2$ and iteratively multiplying the number by 10, up to $10^6$, so in the last column we report a range: this means that we get a memory error with the upper bound while we can take the number of samples in the lower bound.
Thus, the maximum values of samples lies in the specified range.
}
\label{tab:reducing_memory}
\centering
\begin{tabular}{| c | c | c | c | c |} 
memory & instance & \# p.f. & \# rules & max. \# samples \\
\hline
\hline
6 GB & 60 & 122 & 22495 & $>10^6$ \\
4 GB & 50 & 102 & 16151 & $[10^5,10^6]$ \\
2 GB & 50 & 102 & 16151 & $[10^3,10^4]$ \\
1 GB & 40 & 82 & 10554  & $[10^4,10^5]$ \\
\hline
\end{tabular}
\end{table}

\section{Related Work}
\label{sec:related}

Probabilistic logic programs with discrete and continuous random variables have been the subject of various works.
~\cite{DBLP:conf/ilp/GutmannJR10} proposed Hybrid ProbLog, an extension of ProbLog~\citep{DBLP:conf/ijcai/RaedtKT07} to support continuous distributions.
There are several differences with Hybrid ProbLog: first, Hybrid ProbLog focuses on PLP while our approach on PASP.
Thus, the syntax and semantics are different.
For the syntax, in PASP we can use rich constructs such as aggregates, that greatly increase the expressivity of the programs.
For the semantics, at high level, PLP requires that every world (i.e., combination of probabilistic facts) has exactly one model while PASP does not.
Another difference with Hybrid ProbLog is in the discretization process: Hybrid ProbLog discretizes the proofs of a program while we directly discretize the program.
Moreover, their inference algorithm is based on the construction of a compact representation of the program via Binary Decision Diagrams, while we use both ASP solvers with projective solutions and knowledge compilation targeting NNF.

Distributional Clauses~\citep{DBLP:journals/tplp/GutmannTKBR11} and Extended PRISM~\citep{TLP:8688161} are two other proposals to handle both discrete and continuous random variables.
The semantics of the former is based on a stochastic extension of the immediate consequence $T_p$ operator, while the latter considers the least model semantics of constraint logic programs~\citep{DBLP:journals/jlp/JaffarM94} and extends the PRISM framework~\citep{DBLP:conf/iclp/Sato95}.
\cite{DBLP:journals/ai/MichelsHLV15} introduced Probabilistic Constraint Logic Programming whose semantics is based on an extension of Sato's Distribution Semantics~\citep{DBLP:conf/iclp/Sato95}.
\cite{AzzRigLam21-AIJ-IJ} proposed a semantics for hybrid probabilistic logic programs that allows a denumerable number of random variables.
In general, all the previously discussed approaches only support normal clauses, do not adopt some of the ASP constructs, such as aggregates and constraints, and require that the worlds have a single model.
Some languages allow handling uncertainty in ASP, such as LPMLN~\citep{DBLP:conf/kr/LeeW16}, P-log~\citep{DBLP:journals/tplp/BaralGR09}, PrASP~\citep{DBLP:conf/iclp/NicklesM15}, and differentiable SAT/ASP~\citep{DBLP:conf/ilp/Nickles18a} but none of these consider continuous distributions.
PASOCS~\citep{tuckey2021PASOCS} is a system for performing inference in probabilistic answer set programs under the credal semantics, but it does not allow worlds without answer sets and continuous variables while plingo~\citep{plingo2022} consider the LPMLN, P-log, and ProbLog semantics (the relationship among these has been discussed in~\citep{DBLP:conf/aaai/LeeY17}).
The credal least undefined semantics~\citep{DBLP:conf/kr/RochaC22} and the smProbLog semantics~\citep{totis_de_raedt_kimmig_2023} handle unsatisfiable worlds, but by a considering three-valued semantics and do not allow continuous random variables.

There is a large body of work on inference in Probabilistic Programming (PP)~\citep{pfeiffer2016,DBLP:journals/corr/TranHSBMB17,DBLP:conf/cav/GehrMV16,vandemeent2021introduction} with both discrete and continuous random variables, with several available tools~\citep{bingham2018pyro,phan2019composable,tran2016edward}.
PLP and PASP adopt a declarative approach to describe a domain, so they are particularly suitable for describing relational domains.
Translating a PLP/PASP into PP is possible but would result in a much longer and less understandable program.

\section{Conclusions}
\label{sec:conclusions}
In this paper we propose \textit{Hybrid} Probabilistic Answer Set Programming, an extension of Probabilistic Answer Set Programming under the credal semantics that allows both discrete and continuous random variables.
We restrict the types of possible numerical constraints and, to perform exact inference, we convert the program containing both discrete probabilistic facts and continuous random variables into a program containing only discrete probabilistic facts, similarly to~\citep{DBLP:conf/ilp/GutmannJR10}.
We leverage two existing tools for exact inference, one based on projected answer set enumeration and one based on knowledge compilation.
We also consider approximate inference by sampling either the discretized or the original program.
We tested the four algorithms on different datasets.
The results show that the exact algorithm based on projected answer set enumeration is feasible only for small instances while the one based on knowledge compilation can scale to larger programs.
Approximate algorithms can handle larger instances and sampling the discretized program is often faster than sampling the original program.
However, this has a cost in terms of required memory, since the discretization process adds a consistent number of rules and probabilistic facts.
In the future, we plan to extend our framework to also consider comparisons involving more than one continuous random variable and general expressions, as well as considering lifted inference approaches~\citep{AzzRig23-IJAR-IJ} \revisione{and handle the inference problem with approximate answer set counting~\citep{Kabir2022ApproxASPA}}.

\section*{Acknowledgements} 
This work has been partially supported by Spoke 1 ``FutureHPC \& BigData'' of the Italian Research Center on High-Performance Computing, Big Data and Quantum Computing (ICSC) funded by MUR Missione 4 - Next Generation EU (NGEU) and by Partenariato Esteso PE00000013 - ``FAIR - Future Artificial Intelligence Research'' - Spoke 8 ``Pervasive AI'', funded by MUR through PNRR - M4C2 - Investimento 1.3 (Decreto Direttoriale MUR n. 341 of 15th March 2022) under the Next Generation EU (NGEU).
Both authors are members of the Gruppo Nazionale Calcolo Scientifico -- Istituto Nazionale di Alta Matematica (GNCS-INdAM).
We acknowledge the CINECA award under the ISCRA initiative, for the availability of high-performance computing resources and support.

\subsection*{Competing interests}
The authors declare none.

\bibliographystyle{apalike}
\bibliography{2022hybrid_pasp_TPLP_monofile}

\end{document}